\documentclass[letterpaper]{article} 
\usepackage{aaai2026}  
\usepackage{times}  
\usepackage{helvet}  
\usepackage{courier}  
\usepackage[hyphens]{url}  
\usepackage{graphicx} 
\usepackage{natbib}  
\usepackage{caption} 
\usepackage{algorithm}
\usepackage{algorithmic}

\usepackage{newfloat}
\usepackage{listings}
\DeclareCaptionStyle{ruled}{labelfont=normalfont,labelsep=colon,strut=off} 
\floatstyle{ruled}
\newfloat{listing}{tb}{lst}{}
\floatname{listing}{Listing}
\usepackage{xcolor}
\usepackage{amsfonts}
\usepackage{amsmath}
\usepackage{multirow}
\usepackage{booktabs}
\newtheorem{theorem}{Theorem}

\usepackage[table]{xcolor}
\usepackage{amssymb}
\usepackage{pifont}
\usepackage{paralist}

\usepackage{pgfplots}
\pgfplotsset{compat=1.17}
\usepackage{tikz}
\usetikzlibrary{positioning, arrows.meta, calc, fit, decorations.pathreplacing, decorations.markings}

\usepackage{caption}
\usepackage{subfig}

\newcommand{\cmark}{\textcolor{green!60!black}{\ding{51}}}  
\newcommand{\xmark}{\textcolor{red}{\ding{55}}}             

\title{Kernelized Edge Attention: Addressing Semantic Attention Blurring in Temporal Graph Neural Networks}
\author{
    Govind Waghmare\textsuperscript{\rm 1},
    Srini Rohan Gujulla Leel\textsuperscript{\rm 1},
    Nikhil Tumbde\equalcontrib\textsuperscript{\rm 1},\\
    Sumedh B G\equalcontrib\textsuperscript{\rm 1},
    Sonia Gupta\textsuperscript{\rm 1},
    Srikanta Bedathur\textsuperscript{\rm 2}
}
\affiliations{
    \textsuperscript{\rm 1}Mastercard, 
    \textsuperscript{\rm 2}Indian Institute of Technology, Delhi\\

    \{govind.waghmare, srinirohan.gujullaleel, nikhil.tumbde, sumedh.bg, sonia.gupta\}@mastercard.com,
    srikanta@cse.iitd.ac.in
}

\begin{document}

\maketitle



\begin{abstract}
Temporal Graph Neural Networks (TGNNs) aim to capture the evolving structure and timing of interactions in dynamic graphs. Although many models incorporate time through encodings or architectural design, they often compute attention over entangled node and edge representations, failing to reflect their distinct temporal behaviors. Node embeddings evolve slowly as they aggregate long-term structural context, while edge features reflect transient, timestamped interactions (e.g. messages, trades, or transactions). This mismatch results in semantic attention blurring, where attention weights cannot distinguish between slowly drifting node states and rapidly changing, information-rich edge interactions. As a result, models struggle to capture fine-grained temporal dependencies and provide limited transparency into how temporal relevance is computed. This paper introduces KEAT (Kernelized Edge Attention for Temporal Graphs), a novel attention formulation that modulates edge features using a family of continuous-time kernels, including Laplacian, RBF, and learnable MLP variant. KEAT preserves the distinct roles of nodes and edges, and integrates seamlessly with both Transformer-style (e.g., DyGFormer) and message-passing (e.g., TGN) architectures. It achieves up to 18\% MRR improvement over the recent DyGFormer and 7\% over TGN on link prediction tasks, enabling more accurate, interpretable and temporally aware message passing in TGNNs.
\end{abstract}

\begin{links}
    \link{Code}{https://waghmaregovind.github.io/KEAT-TemporalGNN}
\end{links}

\section{Introduction}

Learning from time-evolving graph data is critical to applications such as recommendation, event forecasting, and fraud detection. While Graph Neural Networks (GNNs) have proven effective on static graphs \cite{kipf2017semisupervised,graphsage,liao2018lanczosnet}, adapting them to continuous-time dynamics poses fundamental challenges~\cite{kazemi2020representation,longa2023graph,zheng2025survey}. TGNNs address this by modeling both structural and temporal dependencies, often through specialized architectures like temporal walks~\cite{wang2021inductivecawn}, sequence patching ~\cite{dygformer2023}, and alternative message-passing ~\cite{luo2022neighborhoodnat,cong2023dographmixer}, or via time encodings~\cite{chen2025rethinking,Xu2020Inductivetgat,xu2019self}.

\begin{figure}[t]
\centering
\begin{tikzpicture}[
    font=\footnotesize,
    every node/.style={font=\footnotesize},
    profile/.style={draw, fill=blue!10, rounded corners=2pt, minimum width=2.2cm, minimum height=0.6cm},
    txn/.style={draw, fill=red!20, rounded corners=2pt, minimum width=.8cm, minimum height=0.4cm},
    axis/.style={very thick, ->},
    label/.style={font=\scriptsize}
]

\draw[axis] (0,0) -- (7.0,0) node[below right] {Time};

\draw[thick, blue, smooth, domain=0.5:7, samples=100] plot(\x, {0.8 + 0.1*sin(deg(\x)) + 0.02*\x});

\node[profile, anchor=south west] at (0.5,1.5) {Cardholder Profile};

\node[txn, anchor=south west] at (0.5,0.2) {Txn};

\draw[->, thick, blue] (3.2,1.7) -- (6.5,1.2) node[midway, above, sloped] {\small gradual change};

\foreach \x in {1.5, 2.4, 4.2, 4.3, 5.1, 5.5} {
    \draw[red!70!black, thick] (\x,0) -- (\x,1);
}

\draw[decorate,decoration={brace,mirror, amplitude=4pt}] (0.4,2.3) -- (7.0,2.3) 
    node[midway, above=5pt, text width=7.5cm, align=center, font=\small] 
    {Slow evolution of node features (e.g., credit score, user intent)};

\draw[decorate,decoration={brace, amplitude=4pt}] (1.2,-0.3) -- (5.8,-0.3) 
    node[midway, below=5pt, text width=7cm, align=center, font=\small]
    {Rapid, irregular edge activity (e.g., transactions)};

\end{tikzpicture}

\caption{Temporal mismatch in dynamic graphs. While user profiles (nodes) change gradually (e.g., credit behavior), interactions like transactions (edges) vary rapidly and irregularly. Attention mechanisms in existing TGNNs often entangle these signals, leading to semantic attention blurring.}

\label{fig:temporal-mismatch}
\end{figure}
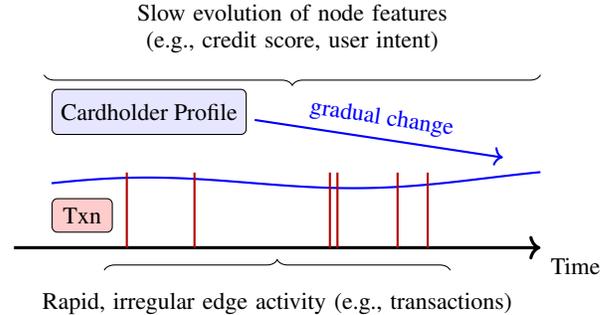

To model temporal relationships in graph, attention-based TGNNs have gained popularity, particularly those inspired by Transformer architectures \cite{SimpleDyG,dygformer2023,tgn_icml_grl2020}. These models extend self-attention to graphs by assigning learnable importance scores to neighboring nodes and edges. While effective, most existing approaches compute attention over entangled node and edge representations. They fail to explicitly modulate edge features based on complex temporal dynamics, overlooking the fact that nodes and edges often evolve at different temporal rates. See Figure~\ref{fig:temporal-mismatch} for an illustration of this mismatch. 

For instance, in financial transaction graphs, cardholder profiles (nodes) evolve gradually, reflecting changes like creditworthiness or spending behavior. Whereas, transaction records (edges) vary rapidly and irregularly, especially in the presence of suspicious or high-frequency activity. Capturing these short-term edge patterns without overwhelming them with slowly drifting node context is crucial for temporal precision. In particular, these models do not directly encode how time should modulate attention weights or influence message aggregation. This leads to \emph{semantic attention blurring}, where slowly evolving node embeddings, capturing long-term structural context, are mixed indiscriminately with sparse, transient edge features that carry rich, recent information. As a result, attention mechanisms struggle to prioritize temporally and semantically relevant interactions, reducing both temporal fidelity and interpretability.


While prior works focus on improving time encodings to address these issues, such efforts may offer limited returns under current attention formulations, such as those implemented in \texttt{TransformerConv}\footnote{Documentation for \texttt{TransformerConv}: \url{https://pytorch-geometric.readthedocs.io/en/stable/generated/torch_geometric.nn.conv.TransformerConv.html} \cite{pytorch_geo_fey}}, where attention scores are computed over the sum of node and edge projections. This formulation, or slight variations of it, is adopted in models such as TGN~\cite{tgn_icml_grl2020}, DyGFormer~\cite{dygformer2023}, and others~\cite{Xu2020Inductivetgat}, which similarly suffer from an inability to disentangle temporally distinct signals. As a result, such aggregation blurs temporal distinctions, making it difficult to separate recent, informative edge events from older or semantically unrelated interactions. This phenomenon is illustrated in Figure \ref{fig:intro_attn_plots}. In contrast, we shift the focus from time encoding design to the attention mechanism itself. By modulating only edge features, \textbf{comprising raw edge attributes and time encodings}, with continuous-time kernels, we introduce explicit temporal awareness into the attention computation and message aggregation. Crucially, our formulation is compatible with, and often enhances, existing time encoding schemes, making it broadly applicable and encoding-agnostic.

\begin{figure*}[t]
    \centering
    \includegraphics[width=0.99\linewidth]{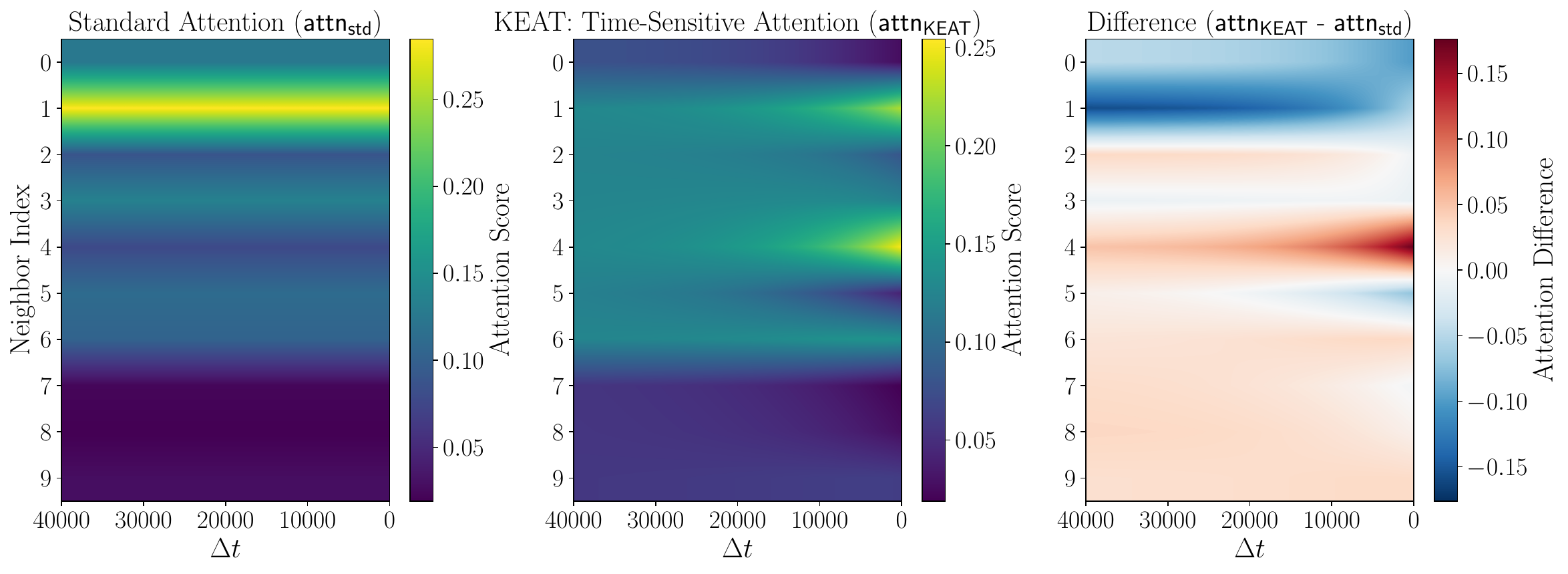}
    \caption{Temporal attention on \texttt{tgbl-wiki} illustrating \textbf{semantic attention blurring}. We plot attention over ten neighbors across relative time $\Delta t$. (Left) Standard attention assigns uniform attention values across time for each neighbor, leading to semantic blurring. This shows invariance of attention to time encodings or $\Delta t$. (Middle) KEAT introduces time-aware modulation, emphasizing recent (smaller $\Delta t $ value) and contextually relevant edges. (Right) The difference highlights how KEAT corrects the blurred focus of standard attention. See experimental section for details.}
    \label{fig:intro_attn_plots}
\end{figure*}


In this work, we propose a novel temporal attention formulation that directly addresses the mismatch in temporal dynamics between node and edge features in TGNNs. We call our framework \textbf{KEAT} \textit{(\textbf{\underline{K}}ernelized \textbf{\underline{E}}dge \textbf{\underline{A}}ttention for \textbf{\underline{T}}emporal Graphs)}. KEAT applies a family of continuous-time kernels exclusively to edge features, thereby incorporating temporal sensitivity into attention without altering node semantics. The key idea is simple yet effective: edge-time features are modulated via kernel functions such as Laplacian, RBF, or a learnable MLP-based variant. This modulation is applied within the key and value projections of attention, allowing recent interactions to be emphasized while preserving the relatively stable, slow-evolving role of node embeddings. KEAT integrates seamlessly with Transformer-style and message-passing TGNNs and remains agnostic to specific time encoding schemes.

Existing models like DyGFormer~\cite{dygformer2023} leverage temporal patches and time-interval encodings to capture evolving graph dynamics. However, they still rely on conventional attention formulations and inherit similar limitations. We show that integrating KEAT into DyGFormer and other TGNN architectures improves temporal precision, interpretability, and predictive performance. These gains validate the broad applicability and plug-and-play nature of KEAT across diverse temporal modeling settings. Our key contributions are summarized as follows:

\begin{itemize}
\item We formally identify the issue of \emph{semantic attention blurring} in Transformer-based TGNNs, where attention scores over combined node and edge projections hinder temporal precision and interpretability.

\item To resolve this, we introduce KEAT, a time-aware attention mechanism that applies continuous-time kernel modulation exclusively to edge features, preserving the distinct temporal roles of nodes and edges.

\item KEAT is agnostic to time encoding schemes and compatible with a range of TGNN architectures. It requires minimal architectural changes and no additional supervision, making it practical for integration into existing systems.

\item Through extensive experiments on dynamic graph benchmarks, we demonstrate that KEAT improves accuracy, enhances temporal fidelity, and produces interpretable attention patterns aligned with evolving edge dynamics.
\end{itemize}

\section{Related Work}

\paragraph{Temporal Graph Neural Networks.}
TGNNs extend GNNs to dynamic graphs by modeling both structure and time. Early methods such as TGN~\cite{tgn_icml_grl2020}, TGAT~\cite{Xu2020Inductivetgat}, and CAWN~\cite{wang2021inductivecawn} incorporate time using continuous-time encodings or time-difference embeddings. Other approaches adopt positional or sinusoidal encodings inspired by language models~\cite{xu2019self}. Recent work like LeTE~\cite{chen2025rethinking} introduces a learnable transformation framework for time encoding using nonlinear mappings such as Fourier and splines, aiming to unify and generalize existing encoding schemes. Despite these advances, most of these models assume that better time encodings alone will yield better performance. In contrast, we argue that time encoding improvements have limited effect if the attention formulation itself cannot separate the dynamics of node and edge features. Our work departs from this direction by keeping time encodings fixed and focusing instead on making attention computations explicitly time-aware through kernel-based modulation.

\paragraph{Attention Mechanisms in TGNNs.}
Transformer-inspired TGNNs such as TGAT~\cite{Xu2020Inductivetgat}, TGN~\cite{tgn_icml_grl2020}, and DyGFormer~\cite{dygformer2023} use attention to weigh the influence of temporal neighbors during message passing. These models often follow the \texttt{TransformerConv} formulation, a standardized implementation from the PyTorch Geometric, computing the attention scores from the sum of projected node and edge features. This design simplifies computation but introduces semantic attention blurring, wherein temporally distinct messages are mixed. Some methods attempt to refine attention using structural bias (e.g., CAWN’s walk-based decay~\cite{wang2021inductivecawn}), and a few incorporate time signals into attention projections through time encodings~\cite{Xu2020Inductivetgat}, but explicit time-aware modulation, such as continuous-time scaling of attention inputs, remains underexplored. Our kernel-based attention addresses this gap by decoupling edge dynamics from node semantics via time-dependent scaling, providing a lightweight yet effective way to inject temporal information into the attention mechanism.


\paragraph{Modulated Attention Beyond TGNNs.}
Time-aware attention has been studied in other domains such as NLP, time series and credit risk modeling. In language models, temporal self-attention augments Transformer-based architectures with timestamped text ~\cite{rosin2022temporal}. 
Powerformer \cite{powerformer} and \cite{niu2024attentionrobustrepresentationtime} treat attention as the primary representation for time series. In credit risk assessment, DGNN-SR~\cite{credit_risk_multi_view_te} integrates multiple time perspectives directly into the attention module. In static graphs, Gradformer \cite{gradformer} applies exponential decay to graph transformer. Works like \cite{alibi,Chi2022KERPLEKR} study attention in NLP for extrapolation. While these works make attention more expressive, our approach introduces a modular, temporally kernel-based attention mechanism that applies time-conditioned scaling exclusively to edge representations, thus decoupling node and edge dynamics. This formulation allows time to play a more explicit role in the attention computation and helps reduce semantic attention blurring in a more interpretable way.

\paragraph{Temporal Information Modeling.}
Several TGNNs incorporate temporal information through auxiliary mechanisms rather than within attention. JODIE~\cite{kumar2019predictingjodie} applies time-conditioned projections to evolve user embeddings, and DyRep~\cite{trivedi2018dyrep} models temporal interactions via point processes intensities. TGN~\cite{tgn_icml_grl2020} maintains time-aware memory modules, while CAWN~\cite{wang2021inductivecawn} integrates temporal signals during neighbor sampling. In contrast, our approach embeds temporal characteristics directly into the attention computation by modulating edge features with kernel-weighted time encodings. This enables interpretable, differentiable, and architecture-agnostic modulation.

\paragraph{Modular and Extendable TGNN Designs.}
Recent architectures such as DyGFormer~\cite{dygformer2023} adopt patch-based attention blocks to process temporally localized subgraphs. While these designs improve scalability and temporal resolution, they still compute attention using mixed node-edge projections. Our kernel-based formulation can be seamlessly integrated into such architectures, enhancing their temporal fidelity without disrupting their tokenization, encoding, or aggregation pipelines. This highlights the flexibility of our method as a plug-and-play module for improving temporal sensitivity in a wide range of TGNNs.

\section{Methodology}

\subsection{Problem Setting and Notation}
Let $\mathcal{G} = (\mathcal{V}, \mathcal{E}, T)$ be a dynamic graph, where $\mathcal{V}$ is the set of nodes, $\mathcal{E} \subseteq \mathcal{V} \times \mathcal{V}$ is the set of timestamped edges, and $T$ is the set of event times. Each interaction is a tuple $(i, j, \mathbf{e}_{ij}, t_{ij})$, with $i, j \in \mathcal{V}$, $\mathbf{e}_{ij} \in \mathbb{R}^{d_e}$ representing raw edge features, and $t_{ij} \in \mathbb{R}_+$ denoting the interaction time. TGNNs typically capture temporal context by computing the time difference $\Delta t = t_i - t_{ij}$ between the current query time $t_i$ and a past interaction time $t_{ij}$. This $\Delta t$ is encoded using a time encoding function $\phi: \mathbb{R}_+ \rightarrow \mathbb{R}^{d_t}$, producing $\phi(\Delta t)$. The temporal and raw edge features are then concatenated to form: $\bar{\mathbf{e}}_{ij} = [\mathbf{e}_{ij} \, \| \, \phi(\Delta t)] \in \mathbb{R}^{d_e + d_t}$. Given the temporal neighborhood $\mathcal{N}(i)$ of node $i$, the objective is to compute updated node embeddings $\mathbf{h}_i' \in \mathbb{R}^d$ that are both structurally informative and temporally sensitive.

\subsection{Transformer-style Temporal Message Passing}
Transformer-based attention mechanisms are widely adopted in TGNNs for aggregating messages from temporal neighbors. A representative implementation, \texttt{TransformerConv}~\cite{pytorch_geo_fey}, computes attention from a source node $i$ to a target node $j$ as:
\begin{equation}
\alpha_{ij} = \text{softmax}_j \left( \frac{ (\mathbf{W}_q \mathbf{h}_i)^\top (\mathbf{W}_k \mathbf{h}_j + \mathbf{W}_e \bar{\mathbf{e}}_{ij}) }{ \sqrt{d_k} } \right),
\label{eq:standard_attention}
\end{equation}
where $\mathbf{h}_i, \mathbf{h}_j \in \mathbb{R}^d$ are node embeddings, $\bar{\mathbf{e}}_{ij} \in \mathbb{R}^{d_e + d_t}$ is the concatenated edge-time feature, and $\mathbf{W}_q, \mathbf{W}_k \in \mathbb{R}^{d' \times d}$, $\mathbf{W}_e \in \mathbb{R}^{d' \times (d_e + d_t)}$ are learnable projection matrices. The attention is scaled by $d_k = d'$ as in standard Transformers \cite{NIPS2017_3f5ee243_vaswani}. The final embedding of node $i$ is:
\begin{equation}
\mathbf{h}_i' = \mathbf{W}_{\text{self}} \mathbf{h}_i + \sum_{j \in \mathcal{N}(i)} \alpha_{ij} \cdot (\mathbf{W}_v \mathbf{h}_j + \mathbf{W}_e' \bar{\mathbf{e}}_{ij}),
\label{eq:standard_update}
\end{equation}
where $\mathbf{W}_v \in \mathbb{R}^{d' \times d}$, $\mathbf{W}_e' \in \mathbb{R}^{d' \times (d_e + d_t)}$, and $\mathbf{W}_{\text{self}} \in \mathbb{R}^{d' \times d}$ are learnable weight matrices. This formulation, or variants of it, is used in several TGNNs such as TGN~\cite{tgn_icml_grl2020}, DyGFormer~\cite{dygformer2023}, and others.

\subsection{Time Encodings in Temporal Graphs}

Popular time encoding functions $\phi(\Delta t)$ include:

\begin{itemize}
    \item \textbf{Fixed encodings}: GraphMixer~\cite{cong2023dographmixer} employs non-trainable sinusoidal time encodings, similar to the positional encodings used in standard Transformers.

    \item \textbf{Relative and learnable encodings}: TGN~\cite{tgn_icml_grl2020} uses learnable sinusoidal embeddings for relative time, similar to TGAT~\cite{Xu2020Inductivetgat}, applied in both message computation and memory updates. LeTE~\cite{chen2025rethinking} learns flexible time encoding functions using splines and Fourier-based transformations.
\end{itemize}

\paragraph{Limitations of Time Encodings.}
Time encodings $\phi(\Delta t)$ in TGNNs are often concatenated with node or edge features (see Eq.\ref{eq:standard_attention} and \ref{eq:standard_update}) but do not explicitly influence the attention weights. As a result, they weakly capture temporal patterns like frequency or semantic relevance (Figure \ref{fig:intro_attn_plots}), limiting the model’s ability to differentiate diverse temporal interactions.

A common approach is to encode time differences $\Delta t$ using sinusoidal functions such as $\phi(\Delta t) = \cos(\omega \Delta t)$ or $\sin(\omega \Delta t)$. As shown in Appendix B, these simple periodic encodings capture only a limited subset of the moment spectrum of the inter-arrival distribution $p(\Delta t)$:
\begin{equation}
\mathbb{E}_{\Delta t}[\cos(\omega \Delta t)] = \sum_{n=0}^{\infty} \frac{(-1)^n \omega^{2n}}{(2n)!} \mathbb{E}[(\Delta t)^{2n}].
\end{equation}
While some prior works use both sine and cosine terms to capture more statistical moments \cite{chen2025rethinking}, these encodings remain passive and fail to directly shape attention weights. As a result, they often miss temporal asymmetries important for aggregation. This limitation is exacerbated in real-world temporal graphs, where the distribution $p(\Delta t)$ often shifts across training and validation sets (refer Appendix C). In many graphs, early interactions are typically sparse, while later ones are denser, causing skewed distributions and shifts in both the mean and higher-order moments.

KEAT addresses this challenge by introducing kernel-modulated encodings of the form $f(\Delta t) = \psi(\Delta t) \cdot \bar{\mathbf{e}}_{ij}$, where $\psi(\Delta t)$ is a temporal kernel. This formulation expands into a full polynomial over $\Delta t$ and becomes sensitive to all moments of $p(\Delta t)$ (see Appendix B.2). Importantly, the modulated signal $f(\Delta t)$ is used to directly influence attention computation, ensuring that time information shapes both the neighbor weighting and the final embedding output.

\subsection{Design Criteria for Temporal Kernels}
To enable time-aware attention, we modulate edge features using a continuous temporal kernel $\psi(\Delta t)$ that scales contributions based on the elapsed time $\Delta t$. The kernel should satisfy several desirable properties: (i) it should decay monotonically with increasing $\Delta t$ to prioritize recent events; (ii) it must be bounded within $[0,1]$ to ensure numerical stability; (iii) it should be continuous to support smooth optimization; and (iv) it should be flexible. Parameterized MLPs can model richer, potentially non-monotonic (vs. Laplacian and RBF) temporal patterns when interpretability is less critical.

\subsection{Family of Temporal Kernels}
We instantiate the temporal modulation function $\psi(\Delta t)$ using the following kernel families:

\begin{itemize}
    \item \textbf{Laplacian kernel:} $\psi(\Delta t) = \exp\left( -\frac{\Delta t}{\sigma} \right)$
    \item \textbf{RBF kernel:} $\psi(\Delta t) = \exp\left( -\frac{\Delta t^2}{\sigma^2} \right)$
    \item \textbf{Learned kernel (MLP):} $\psi(\Delta t) = \mathrm{MLP}(\Delta t)$
\end{itemize}

Here, $\sigma$ denotes the standard deviation of inter-event time differences computed from the training set. The Laplacian and RBF kernels are parameter-free and introduce interpretable temporal biases by smoothly attenuating the influence of older interactions. In contrast, the MLP-based kernel offers greater flexibility, enabling the model to learn complex, data-driven modulation patterns. All variants are differentiable and integrate seamlessly into end-to-end training.

\subsection{Method: KEAT – Kernelized Attention over Time}
We introduce KEAT, a mechanism that integrates temporal modulation into attention by applying a kernel $\psi(\Delta t)$ to the edge-time feature $\bar{\mathbf{e}}_{ij}$ \textit{before} it influences attention or embedding updates. In Transformer-style architectures (Eq.~\ref{eq:standard_attention}) attention scores are computed from projected node embeddings and edge features. Node embeddings evolve slowly and capture long-term structural information, while edge features encode temporally localized, interaction-specific signals. Thus, modulating the edge component using a continuous-time kernel $\psi(\Delta t)$ provides a principled way to inject temporal sensitivity. KEAT requires only a lightweight modification to edge projections, introducing negligible overhead during training or inference.

KEAT explicitly biases or modulates the attention toward different temporal interactions by scaling edge-time features via $\psi(\Delta t)$. The modified attention score becomes:
\begin{equation}
\alpha_{ij} = \text{softmax}_j \left( \frac{ (\mathbf{W}_q \mathbf{h}_i)^\top \left( \mathbf{W}_k \mathbf{h}_j + \mathbf{W}_e \left[f(\Delta t) \right] \right) }{ \sqrt{d_k} } \right),
\label{eq:modulated_attention}
\end{equation}
where, $f(\Delta t) = \psi(\Delta t) \cdot \bar{\mathbf{e}}_{ij}$ and the corresponding node embedding update formula is:
\begin{equation}
\mathbf{h}_i' = \mathbf{W}_{\text{self}} \mathbf{h}_i + \sum_{j \in \mathcal{N}(i)} \alpha_{ij} \cdot \left( \mathbf{W}_v \mathbf{h}_j + \mathbf{W}_e' \left[f(\Delta t) \right] \right)
\label{eq:modulated_update}
\end{equation}


\noindent This design ensures that temporally closer interactions contribute significantly to attention computation and feature aggregation, while preserving the original Transformer architecture. Notably, node features remain unaltered, allowing a clean decoupling of structural and temporal contributions.

In DyGFormer~\cite{dygformer2023}, interaction histories are divided into patches, each summarizing a local neighborhood around source or destination nodes. To integrate KEAT, we modulate only the edge features within each patch using a representative patch timestamp (e.g., mean edge time). Specifically, the query is scaled by $\exp(t_{\text{patch}})$ and the key by $\exp(-t_{\text{patch}})$, resulting in attention scores of the form $\exp(t_{\text{query}} - t_{\text{key}})$. This introduces a directional temporal bias, enabling time-sensitive attention without altering DyGFormer's architecture. See Appendix~D for details.

\begin{table*}[t]
    \centering
    \resizebox{0.77\textwidth}{!}{%
    \begin{tabular}{lccccc}
    \toprule
    \textbf{Method} & \textbf{Split} & \texttt{tgbl-wiki} & \texttt{tgbl-review} & \texttt{tgbl-coin} & \texttt{tgbl-comment} \\
    \midrule

    \multirow{2}{*}{$\text{EdgeBank}_{\text{tw}}$} 
    & Val  & 0.600 & 0.024 & 0.492 & 0.124 \\
    & Test & 0.571 & 0.025 & 0.580 & 0.149 \\

    \midrule

    \multirow{2}{*}{$\text{EdgeBank}_\ensuremath{\infty}$} 
    & Val  & 0.527 & 0.023 & 0.315 & 0.109 \\
    & Test & 0.495 & 0.023 & 0.359 & 0.129 \\

    \midrule

    \multirow{2}{*}{DyRep} 
    & Val  & 0.072 \scriptsize{$\pm$ 0.009} & 0.216 \scriptsize{$\pm$ 0.031} & 0.512 \scriptsize{$\pm$ 0.014} & 0.291 \scriptsize{$\pm$ 0.028} \\
    & Test & 0.050 \scriptsize{$\pm$ 0.017} & 0.220 \scriptsize{$\pm$ 0.030} & 0.452 \scriptsize{$\pm$ 0.046} & 0.289 \scriptsize{$\pm$ 0.033} \\

    \midrule
    
    \multirow{2}{*}{TNCN} 
        & Val  & 0.741 \scriptsize{$\pm$ 0.001} & 0.325 \scriptsize{$\pm$ 0.003} & 0.740 \scriptsize{$\pm$ 0.002} & 0.643 \scriptsize{$\pm$ 0.003} \\
        & Test & \cellcolor{gray!20}0.718 \scriptsize{$\pm$ 0.001} & 0.377 \scriptsize{$\pm$ 0.010} & \cellcolor{yellow!20}0.762 \scriptsize{$\pm$ 0.004} & \cellcolor{yellow!20}0.697 \scriptsize{$\pm$ 0.006} \\
    
    \midrule
    
    \multirow{2}{*}{CTAN} 
    & Val  & \textemdash & \textemdash & \textemdash & \textemdash \\
    & Test & 0.668 \scriptsize{$\pm$ 0.007} & \cellcolor{yellow!20}0.405 \scriptsize{$\pm$ 0.004} & 0.748 \scriptsize{$\pm$ 0.004} & \cellcolor{gray!20}0.671 \scriptsize{$\pm$ 0.067} \\

    \midrule
    
    \multirow{2}{*}{TGN} & Val  & 0.435 \scriptsize{$\pm$ 0.069} & 0.313 \scriptsize{$\pm$ 0.012} & 0.607 \scriptsize{$\pm$ 0.014} & 0.356 \scriptsize{$\pm$ 0.019} \\
                     & Test & 0.396 \scriptsize{$\pm$ 0.060} & 0.349 \scriptsize{$\pm$ 0.020} & 0.586 \scriptsize{$\pm$ 0.037} & 0.379 \scriptsize{$\pm$ 0.021} \\

    \cmidrule{2-6}
    \multirow{2}{*}{KEAT-TGN} & Val  & 0.515 \scriptsize{$\pm$ 0.053} & 0.324 \scriptsize{$\pm$ 0.004} & 0.624 \scriptsize{$\pm$ 0.012} & 0.368 \scriptsize{$\pm$ 0.019} \\
                                  & Test & 0.474 \scriptsize{$\pm$ 0.031} & \cellcolor{gray!20}0.380 \scriptsize{$\pm$ 0.007} & 0.632 \scriptsize{$\pm$ 0.023} & 0.400 \scriptsize{$\pm$ 0.020} \\

    \cmidrule{2-6}
    

    \multirow{1}{*}{\shortstack[l]{Improvement}} 
    & Test & +7.76\% & +3.10\% & +4.62\% & +2.10\% \\
    
    \midrule
    \multirow{2}{*}{DyGFormer} 
        & Val  & 0.816 \scriptsize{$\pm$ 0.005} & 0.219 \scriptsize{$\pm$ 0.017} & 0.730 \scriptsize{$\pm$ 0.002} & 0.613 \scriptsize{$\pm$ 0.003} \\
        & Test &  \cellcolor{yellow!20}0.798 \scriptsize{$\pm$ 0.004} & 0.224 \scriptsize{$\pm$ 0.015} & \cellcolor{gray!20}0.752 \scriptsize{$\pm$ 0.004} & 0.670 \scriptsize{$\pm$ 0.001} \\
    \cmidrule{2-6}
    \multirow{2}{*}{KEAT-DyGFormer} 
        & Val  & {0.829} \scriptsize{$\pm$ 0.003} & {0.335} \scriptsize{$\pm$ 0.020} & \textemdash & \textemdash \\
        & Test & \cellcolor{green!20}{0.815} \scriptsize{$\pm$ 0.005} &  \cellcolor{green!20}{0.412} \scriptsize{$\pm$ 0.012} &  \cellcolor{green!20}{0.803} \scriptsize{$\pm$ 0.003} &  \cellcolor{green!20}{0.776} \scriptsize{$\pm$ 0.001} \\
    \cmidrule{2-6}

    \multirow{1}{*}{\shortstack[l]{Improvement}} 
        & Test & +1.69\% & +18.80\% & +5.10\% & +10.60\% \\
    
    \bottomrule
    \end{tabular}
    }
     \caption{
        Link prediction results (MRR) on the TGBL benchmark datasets. We report both validation and test MRR for each method. All baseline results are taken from TGB leaderboard \cite{TGB}. \textbf{KEAT} denotes our kernel-based attention mechanism integrated into TGN and DyGFormer backbones. The top three test results for each dataset are highlighted using \colorbox{green!20}{green} (first), \colorbox{yellow!20}{yellow} (second), and \colorbox{gray!20}{gray} (third). Entries marked with \textemdash\ indicate that baseline results are not available.}
    \label{tab:link_pred_results}
    \end{table*}

\subsection{Advantages of Kernel-Based Attention}

KEAT offers several desirable properties:
\begin{inparaitem}
\item \textbf{Temporal precision}: (Laplacian and RBF kernels) emphasizes recent interactions by downweighting distant ones;
\item \textbf{Semantic disentanglement}: isolates evolving edge patterns while preserving node semantics;
\item \textbf{Architectural generality}: seamlessly integrates into diverse TGNNs (e.g., TGN, TGAT, DyGFormer);
\item \textbf{Simplicity and efficiency}: lightweight, differentiable, and requires no additional preprocessing.
\end{inparaitem}

Beyond demonstrating empirical utility, we provide theoretical justification for using temporal kernels in attention. In particular, we show that kernel-based modulation suppresses contributions from higher-order moments of the time distribution, improving robustness to distribution shifts across training and inference. The following theorem formally captures this effect for the Laplacian or RBF kernel $\psi(t)$:

\begin{theorem}
Let \( p(t) \) be a probability density function supported on \( [0, \infty) \) such that \( \mathbb{E}[t^n] < \infty \) and \( \mathbb{E}[\psi(t) t^n] < \infty \) for all \( n \geq 0 \), where \( \psi(t) \) is a non-negative, monotonically decreasing kernel function. Let \( \phi(t) = \sum_{n=0}^{\infty} c_n t^n \) be an analytic time encoding function with coefficients \( \{c_n\} \). Define the kernel-to-base ratio for each moment order \( n \) as:
\[
R_n = \frac{\mathbb{E}[\psi(t) t^n]}{\mathbb{E}[t^n]}, \quad \text{with } R_0 = \mathbb{E}[\psi(t)].
\]
Then, the sequence \( \{R_n\}_{n=0}^\infty \) is strictly decreasing and converges to zero: \( \lim_{n \to \infty} R_n = 0 \). As a result, the expected kernel-weighted time encoding \( \mathbb{E}[\psi(t) \cdot \phi(t)] \) becomes increasingly dominated by lower-order terms\footnote{This formulation naturally extends to expectations of the form \( \mathbb{E}[\psi(t) \cdot \bar{\mathbf{e}}_{ij}] \), as used in Equations~\ref{eq:modulated_attention} and~\ref{eq:modulated_update}.} .
\end{theorem}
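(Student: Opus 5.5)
The plan is to view each ratio $R_n$ as an expectation of $\psi$ under a reweighted ("size-biased") distribution and to exploit how these distributions shift as $n$ grows. For each $n\ge 0$ set
\[
q_n(t) = \frac{t^n p(t)}{\mathbb{E}[t^n]}, \qquad t\ge 0,
\]
which is a probability density because $\mathbb{E}[t^n]<\infty$ and, assuming $p$ is not a point mass at $0$, $\mathbb{E}[t^n]>0$. Then $R_n = \mathbb{E}_{q_n}[\psi(t)]$, so both claims of the theorem become statements about the family $\{q_n\}$.

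\textbf{Monotonicity.} Since $q_{n+1}(t) = t\,q_n(t)/\mathbb{E}_{q_n}[t]$, a one-line computation gives
\[
R_{n+1}-R_n = \frac{\mathrm{Cov}_{q_n}\bigl(\psi(t),\,t\bigr)}{\mathbb{E}_{q_n}[t]} .
\]
I will then invoke Chebyshev's covariance (association) inequality: for $\psi$ decreasing and the identity increasing, this covariance is $\le 0$. Equivalently, write it as $\tfrac12\,\mathbb{E}[(\psi(t)-\psi(s))(t-s)]$ for $s,t$ i.i.d.\ from $q_n$ and note the integrand is pointwise nonpositive. Strictness requires more: the integrand must be strictly negative on a set of positive $q_n\otimes q_n$-measure. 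This holds when $\psi$ is strictly decreasing and $p$ is non-degenerate on $(0,\infty)$, which is the case for the Laplacian and RBF kernels. I will state this extra hypothesis explicitly, since "monotonically decreasing" alone gives only $R_{n+1}\le R_n$.

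\textbf{Limit.} The idea is that $q_n$ pushes its mass toward the essential supremum $M$ of the support of $p$.
\begin{itemize}
\item First I show that for any $T<M$, $q_n([0,T])\to 0$. Pick $T'\in(T,M)$; then $P(t>T')>0$, and
\[
q_n([0,T]) \le \frac{T^n}{T'^n\,P(t>T')} \to 0 .
\]
\item Splitting $R_n$ over $[0,T]$ and $(T,\infty)$, and using $\psi$ decreasing with $\psi\le\psi(0)$, gives
\[
R_n \le \psi(0)\,q_n([0,T]) + \psi(T),
\]
hence $\limsup_n R_n \le \psi(T)$ for every $T<M$.
\item If $M=\infty$, letting $T\to\infty$ and using $\psi(T)\to 0$ for the Laplacian and RBF kernels gives $R_n\to 0$.
\end{itemize}
The main obstacle is the bounded-support case. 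When $M<\infty$ the same argument shows $R_n\to\psi(M^-)>0$, so the limit-zero claim is false as stated. I would therefore add the hypothesis that $p$ has unbounded support, or that $\psi(M)=0$. With the strictness assumption above and $R_n\ge 0$, the sequence is then strictly decreasing to $0$.

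\textbf{Consequence for the encoding.} I will expand $\mathbb{E}[\psi(t)\phi(t)] = \sum_n c_n\,\mathbb{E}[t^n]\,R_n$, justifying the interchange of sum and expectation by Fubini/dominated convergence under the assumption $\sum_n |c_n|\,\mathbb{E}[t^n]<\infty$. Relative to the unmodulated expansion $\mathbb{E}[\phi(t)] = \sum_n c_n\,\mathbb{E}[t^n]$ from Appendix~B, the $n$-th term is multiplied by the factor $R_n$, which is decreasing and vanishing. This is precisely the statement that high-order moments are progressively suppressed, so the lower-order terms dominate. The same argument applies coordinatewise to $\mathbb{E}[\psi(t)\,\bar{\mathbf{e}}_{ij}]$.
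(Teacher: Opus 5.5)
Your proof is correct, and the hypotheses you add are genuinely necessary: the statement as written is false, and the paper's proof does not notice this.

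On monotonicity you follow the same skeleton as the paper, which also compares consecutive moment ratios. The paper, however, merely asserts $\mathbb{E}[t^{n+1}\psi]/\mathbb{E}[t^{n}\psi] < \mathbb{E}[t^{n+1}]/\mathbb{E}[t^{n}]$ because $\psi$ ``downweights larger values of $t$''. That inequality is just a rearrangement of $R_{n+1}<R_n$, and the paper silently upgrades ``monotonically decreasing'' to ``strictly decreasing'' to get it. Your identity $R_{n+1}-R_n=\mathrm{Cov}_{q_n}(\psi(t),t)/\mathbb{E}_{q_n}[t]$, combined with Chebyshev's association inequality, is exactly the missing justification.

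On the limit your route is genuinely different. The paper argues that $\psi$ decays faster than any polynomial, so ``the numerator decays faster than the denominator''. You instead show that the size-biased laws $q_n$ concentrate at the essential supremum $M$ of the support, so only the behaviour of $\psi$ near $M$ matters. This shows the paper's heuristic is neither necessary nor sufficient. It is not necessary because any $\psi$ with $\psi(T)\to 0$, e.g.\ $1/(1+t)$, works under unbounded support. It is not sufficient because for $p$ uniform on $[0,1]$ and $\psi(t)=e^{-t}$ one gets $R_n=(n+1)\int_0^1 t^n e^{-t}\,dt\to e^{-1}$.

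For the Laplacian and RBF kernels, which are strictly decreasing and everywhere positive, your analysis shows $R_n\to 0$ holds exactly when $p$ has unbounded support. Your alternative condition should read $\psi(M^-)=0$ rather than $\psi(M)=0$; the two agree for kernels continuous at $M$, and the condition can never hold for these two kernels.

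The paper's sketch buys brevity and intuition. Yours buys an actual proof with the precise hypotheses, and the correct general limit $\psi(M^-)$, which matters since any empirical $p(\Delta t)$ has bounded support. It also justifies, via $\sum_n |c_n|\,\mathbb{E}[t^n]<\infty$, the termwise expansion the paper performs without comment.
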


The proof is provided in Appendix~E. In addition, we establish a complementary result: incorporating temporal kernels into attention reduces the variance of the attention logits. This promotes more stable neighbor weighting, particularly under temporally imbalanced or skewed interaction histories. This variance-reduction effect further underscores the robustness and interpretability benefits of kernel-based attention. See Appendix~E for a complete derivation.

\section{Experiments}

\begin{table*}[t]
\centering
\resizebox{0.9\textwidth}{!}{%
\begin{tabular}{lccccccc}
\toprule
Method & Kernel & \multicolumn{2}{c}{\texttt{tgbl-wiki}} & \multicolumn{2}{c}{\texttt{tgbl-review}} & \multicolumn{2}{c}{\texttt{tgbl-coin}}
\\
\cmidrule(lr){3-4} \cmidrule(lr){5-6} \cmidrule(lr){7-8}
&& Val & Test & Val & Test & Val & Test \\
\midrule
\multirow{3}{*}{KEAT-TGN} & Laplacian   & 0.515 \scriptsize{ $\pm$ 0.053} & \textbf{0.474} \scriptsize{ $\pm$ 0.031} & \textbf{0.324} \scriptsize{ $\pm$ 0.004} & \textbf{0.378} \scriptsize{ $\pm$ 0.007} & \textbf{0.624} \scriptsize{ $\pm$ 0.012} & 0.632 \scriptsize{ $\pm$ 0.023} \\
 & RBF         & 0.512 \scriptsize{ $\pm$ 0.027} & 0.437 \scriptsize{ $\pm$ 0.035} & 0.322 \scriptsize{ $\pm$ 0.008} & 0.374 \scriptsize{ $\pm$ 0.011} & 0.623 \scriptsize{ $\pm$ 0.015} & \textbf{0.636} \scriptsize{ $\pm$ 0.021} \\
& MLP         & \textbf{0.524} \scriptsize{ $\pm$ 0.017} & 0.456 \scriptsize{ $\pm$ 0.027} & \textbf{0.324} \scriptsize{ $\pm$ 0.007} & 0.372 \scriptsize{ $\pm$ 0.008} & 0.620 \scriptsize{ $\pm$ 0.022} & 0.617 \scriptsize{ $\pm$ 0.024} \\
\midrule
TGN & w/o Kernel  & 0.435 \scriptsize{ $\pm$ 0.069} & 0.396 \scriptsize{ $\pm$ 0.060} & 0.313 \scriptsize{ $\pm$ 0.012} & 0.349 \scriptsize{ $\pm$ 0.020} & 0.607 \scriptsize{ $\pm$ 0.014} & 0.586 \scriptsize{ $\pm$ 0.037} \\
\bottomrule
\end{tabular}
}
\caption{MRR results on three TGBL datasets using different kernel types. Each cell reports mean $\pm$ standard deviation for validation and test sets. Best results in each column (based on mean) are shown in bold.}
\label{tab:kernel_ablations}
\end{table*}

We empirically validate the effectiveness of our proposed temporal kernel modulation framework across multiple dynamic graph learning tasks. Our evaluation focuses on three key aspects:
(1) compatibility and integration with existing attention-based TGNN architectures, 
(2) the ability to capture fine-grained temporal dependencies through moment-aware attention modulation, and
(3) performance gains under varying temporal encodings and graph dynamics.

Experiments are primarily conducted on the Temporal Graph Benchmark (TGB)~\cite{TGB}, which provides large-scale, realistic datasets with diverse temporal characteristics. To further evaluate the generality and robustness of our kernelized attention mechanism, we extend the benchmark suite by incorporating additional datasets from the JODIE~\cite{kumar2019predictingjodie} framework for dynamic link prediction, as well as the DGraphFin dataset~\cite{DGraphFin} for dynamic node classification. This extended evaluation suite enables us to test KEAT across a wider range of temporal graph settings, including financial graphs, and user-item interactions.

KEAT is implemented as a lightweight, modular enhancement to existing Transformer-based TGNNs without requiring architectural changes. We report results using standardized splits and evaluation metrics to isolate the impact of temporal attention modulation. Unless stated otherwise, we use the Laplacian kernel by default. A detailed description of datasets along with statistics is provided in Appendix C.

\subsection{Task Definitions}
We consider two standard tasks on temporal graphs $\mathcal{G}$. In \textbf{dynamic link prediction}, the objective is to estimate the probability of a link between two nodes at a given time, framed as a ranking problem with Mean Reciprocal Rank (MRR) as the  metric. For JODIE datasets, we measure performance using AUC and Average Precision (AP). In \textbf{dynamic node classification}, the goal is to predict a node’s label at a specific time based on its interaction history and temporal context, capturing evolving patterns such as user preferences or risk. Performance is evaluated using NDCG@10 for \texttt{tgbn} dasets and using AUC for the DGraphFin dataset.

\paragraph{Baselines}
Our proposed kernel-based attention mechanism is designed to enhance Transformer-style TGNNs. To evaluate its impact, we integrate it into two widely used attention-based architectures, TGN and DyGFormer, serving as our primary backbones. The resulting models, KEAT-TGN and KEAT-DyGFormer, are compared against their vanilla counterparts to isolate the effect of our contribution. We also report results from several strong baselines, including DyRep~\cite{trivedi2018dyrep}, TNCN~\cite{zhang2024efficienttncn}, and CTAN~\cite{CTAN}, as well as memory-based methods EdgeBank$_{\text{tw}}$ and EdgeBank$_{\infty}$~\cite{poursafaei2022towardsedgebank}, included in the TGB leaderboard. While some of these baselines do not explicitly model temporal attention, they provide valuable context for evaluating our method. Other baselines, such as TCL ~\cite{tcl_wang_2021}, NAT \cite{luo2022neighborhoodnat}, CAWN \cite{wang2021inductivecawn}, and GraphMixer \cite{cong2023dographmixer} underperform compared to DyGFormer and/or TGB and are evaluated on a narrower set of datasets, as shown in the TGB leaderboard\footnote{\url{https://tgb.complexdatalab.com/docs/leader_linkprop/}}. Appendix~F summarizes baseline selection criteria, ranking on the link prediction task, their scalability and dataset coverage.

\paragraph{Implementation Details.}
We implement our model using \texttt{PyTorch} and \texttt{PyTorch Geometric}, and the official TGB framework \cite{TGB}. All experiments follow the default hyperparameters provided by the TGB benchmark to ensure fair and reproducible comparisons. Each experiment is run five times with different random seeds. The experimentation settings and hyperparameters for JODIE datasets and \texttt{DGraphFin} are provided in Appendix~G.

\paragraph{Link Prediction Results.} Table~\ref{tab:link_pred_results} presents the MRR for link prediction on the \texttt{tgbl} benchmark datasets. Our method, Kernel-based Attention (KEAT), is applied to two popular TGNN backbones: TGN and DyGFormer. Across all datasets, KEAT variants demonstrate consistent improvements over their respective base models. Notably, KEAT-DyGFormer achieves the best test performance on all four datasets, \texttt{tgbl-wiki}, \texttt{tgbl-review}, \texttt{tgbl-coin}, and \texttt{tgbl-comment}, demonstrating the generality and effectiveness of our kernelized attention mechanism. KEAT-TGN also shows gains over TGN on all datasets, with improvements ranging from +2.1\% to +7.8\% in test MRR. These results affirm that augmenting TGNNs with time-sensitive, structure-aware attention yields measurable and consistent performance benefits. KEAT also demonstrates consistently stronger performance on the JODIE datasets across multiple evaluation metrics, highlighting its effectiveness in handling time-evolving interactions. Details are in Appendix G.




\paragraph{Effect of Kernel Choice.}
We evaluate the effect of different kernel choices within the {KEAT} framework through an ablation study using three variants: \textit{Laplacian}, \textit{RBF}, \textit{MLP}, with a baseline (no kernel modulation). All experiments are conducted using the TGN backbone on three benchmark datasets, \texttt{tgbl-wiki}, \texttt{tgbl-review}, and \texttt{tgbl-coin} (see Table~\ref{tab:kernel_ablations}). The results consistently indicate that temporal kernels enhance performance. For instance, on \texttt{tgbl-wiki}, the Laplacian kernel yields an MRR of {0.474{\scriptsize$\pm$0.031}}, marking a significant improvement of {+7.8\%} over the no-kernel baseline ({0.396{\scriptsize$\pm$0.060}}). Similarly, the \textit{RBF kernel} achieves the highest MRR on \texttt{tgbl-coin} at {0.636{\scriptsize$\pm$0.021}}, surpassing the baseline by {+5.0\%}. These gains underscore the benefit of temporally modulating edge features when computing attention.

Laplacian and RBF perform most consistently, benefiting from positive-definite, distance-based temporal modulation. MLP is more expressive but has higher variance, as it lacks built-in inductive bias and depends heavily on available data. Overall, these results highlight the generalization advantage of structured kernels in time-sensitive tasks.

\paragraph{Choosing the Right Kernel.}
Laplacian prioritizes recency, ideal for short-term temporal dynamics. In contrast, the RBF kernel provides a smoother, more balanced decay that captures both recent and mid-range dependencies. MLP is suited for high-data regimes with complex, non-monotonic temporal patterns but may underperform when supervision is limited. Overall, Laplacian and RBF offer strong, low-variance performance across settings. Unless otherwise specified, we use Laplacian as the default kernel.


\paragraph{Node Classification Results.} KEAT consistently improves node classification performance when integrated into TGN across diverse \texttt{tgbn} datasets (refer Appendix G). On \texttt{tgbn-trade} and \texttt{tgbn-genre}, KEAT-TGN achieves substantial gains of +5.30\% and +5.29\% in test NDCG@10, respectively. These improvements highlight KEAT’s ability to inject temporal structure into attention mechanism, aligning node representations more closely with their evolving interaction context. Even on challenging datasets like \texttt{tgbn-token}, KEAT improves performance by +1.1\%, indicating robustness. The slight but consistent gain on \texttt{tgbn-reddit} suggests that KEAT complements existing signals without degrading strong baselines. These findings reinforce KEAT’s effectiveness in stabilizing node-level predictions and improving temporal generalization. Refer Appendix G for results on \texttt{DGraphFin}.

\paragraph{Impact of KEAT Across Time Encoding Strategies.}
We evaluate the robustness of KEAT with various time encoding methods, including \texttt{GraphMixer} (non-learnable), \texttt{TGN/TGAT}(learnable), and \texttt{LeTE}(learnable, Fourier and Spline-based), on three datasets. Results in Appendix G (on \texttt{tgbl-wiki}, \texttt{tgbl-review} and \texttt{tgbl-coin}) demonstrate that KEAT consistently improves test MRR, regardless of the encoding strategy. For instance, on \texttt{tgbl-wiki}, KEAT improves test MRR by +7.8\% with \texttt{TGN/TGAT} and +7.3\% with \texttt{LeTE}. Even when the raw encodings are strong (e.g., \texttt{GraphMixer} on \texttt{tgbl-review}), KEAT improves or maintains performance. Importantly, we observe reduced variance across runs when KEAT is used, suggesting it helps regularize attention with structured inductive bias. These results indicate that KEAT is orthogonal to the choice of time encoding and acts as a plug-and-play enhancement for diverse temporal message-passing models.

Attention plots in Appendix G highlights how KEAT enables edge-modulated, time-sensitive attention over neighbors. Unlike standard attention, which yields flat, temporally agnostic weights, KEAT dynamically adjusts attention based on both edge timing and feature context. This allows attention to vary meaningfully with $\Delta t$, yielding sharper, context-aware focus that mitigates semantic attention blurring and enhances temporal discrimination. Furthermore, attention weight heatmaps over time (refer to Appendix~G) illustrate the interpretability of KEAT: they reveal how attention shifts gradually as edge timestamps grow older, offering insights into how the model prioritizes recent and semantically relevant interactions.


\paragraph{Computational Complexity.} With Laplacian or RBF kernels, KEAT incurs no additional computational overhead and retains the same time and space complexity as its underlying backbone architectures, TGN and DyGFormer. MLP kernel operates on scalar time gaps with a few parameters. Under standard TGBL settings, we observe similar inference complexity of MLP kernel compared to other kernels.

\paragraph{Extended results.} For completeness, we provide detailed dataset statistics, extended ablations (e.g., node vs. edge modulation, kernel width sensitivity), results on additional datasets (including JODIE and DGraphFin), full experimental settings, and theoretical proofs in Appendix. We encourage readers to refer to these sections for further insights. Together, these empirical results establish KEAT as a general-purpose enhancement for TGNNs. By embedding temporal structure directly into the attention computation, KEAT improves both predictive performance and training stability across datasets, time encoding schemes, and kernel types.

\paragraph{Limitations of KEAT.}
While KEAT demonstrates strong performance across datasets, it may be less effective in scenarios when edge activity is extremely sparse or highly clustered in time, which can limit the informativeness of temporal modulation. Addressing such challenging regimes remains an important and promising direction for future work.

\section{Conclusion}

We presented KEAT, a kernel-based attention framework that addresses semantic attention blurring in TGNNs. By introducing temporal kernels into the attention mechanism, KEAT preserves fine-grained distinctions between recent and older edge interactions, enabling more precise temporal reasoning. The method is lightweight, model-agnostic, and compatible with diverse time encoding strategies. Empirical results on diverse datasets show consistent performance gains across multiple tasks and model backbones such as TGN and DyGFormer. Extensive ablations further confirm the impact of kernel choice, edge-level modulation, and reduced attention variance. Overall, KEAT offers a simple yet effective solution to improve both the accuracy and generalization of TGNNs with minimal overhead.

\section{Acknowledgments}
Srikanta Bedathur acknowledges his DS Chair Professor of AI fellowship and the funding support from Mastercard AI Garage for this work. 

\bibliography{aaai2026}

\twocolumn[\section{Appendix}

\noindent\textbf{Appendix A:} Notations

\noindent\textbf{Appendix B:} Moment Analysis of Kernel-Weighted Time Encodings

\noindent\textbf{Appendix C:} Dataset Description and Statistics

\noindent\textbf{Appendix D:} DygFormer

\noindent\textbf{Appendix E:} Impact of Temporal Kernels on Higher-Order Moments

\noindent\textbf{Appendix F:} Baselines

\noindent\textbf{Appendix G:} Additional Experimental Details and Results

\noindent\textbf{Appendix H:} Code and Reproducibility]

\begin{table*}[ht!]
\centering

\begin{tabular}{ll}
\multicolumn{2}{c}{\Large \textbf{Appendix A: Notations}} \\
\toprule
\textbf{Symbol} & \textbf{Description} \\
\midrule
\multicolumn{2}{l}{\textit{Graph and Temporal Setting}} \\
$\mathcal{G} = (\mathcal{V}, \mathcal{E}, T)$ & Dynamic temporal graph \\
$\mathcal{V}, \mathcal{E}$ & Node and edge sets \\
$T$ & Set of event timestamps \\
$t_{ij}$ & Timestamp of edge $(i, j)$ \\
$\Delta t = t_i - t_{ij}$ & Time elapsed since interaction \\
$\mathcal{N}(i)$ & Temporal neighbors of node $i$ \\
$\mathbf{e}_{ij} \in \mathbb{R}^{d_e}$ & Static edge features \\
$\bar{\mathbf{e}}_{ij} = [\mathbf{e}_{ij} \| \phi(\Delta t)]$ & Concatenated edge-time feature \\
\midrule
\multicolumn{2}{l}{\textit{Embeddings and Attention Computation}} \\
$\mathbf{h}_i, \mathbf{h}_j \in \mathbb{R}^d$ & Node embeddings \\
$\alpha_{ij}$ & Attention from node $i$ to $j$ \\
$d_k$ & Key/query dimensionality \\
$\mathbf{h}_i'$ & Updated embedding of node $i$ \\
$\mathbf{W}_q, \mathbf{W}_k, \mathbf{W}_v$ & Query, key, and value projection matrices \\
$\mathbf{W}_{\text{self}}$ & Self-embedding residual weight \\
$\mathbf{W}_e, \mathbf{W}_e'$ & Edge feature transformation matrices \\
\midrule
\multicolumn{2}{l}{\textit{Temporal Modulation (KEAT)}} \\
$\psi(\Delta t)$ & Temporal kernel applied to edge \\
$\psi(\Delta t) = \exp(-\Delta t / \sigma)$ & Laplacian kernel \\
$\psi(\Delta t) = \exp(-\Delta t^2 / \sigma^2)$ & RBF kernel \\
$\psi(\Delta t) = \mathrm{MLP}(\Delta t)$ & Learned kernel (parametric) \\
$\sigma$ & Temporal scale parameter (std. of $\Delta t$) \\
$\psi(\Delta t) \cdot \bar{\mathbf{e}}_{ij}$ & Modulated edge-time feature \\
\midrule
\multicolumn{2}{l}{\textit{Series Interpretation and Expectations}} \\
$\omega$ & Frequency parameter in cosine kernel \\
$\mathbb{E}_{\Delta t}[\cos(\omega \Delta t)]$ & Expected time-frequency response \\
$\mathbb{E}[(\Delta t)^{2n}]$ & $2n$-th moment of time differences \\
$n$ & Index in Taylor expansion \\
\bottomrule
\end{tabular}
\caption{Summary of Notation}
\label{tab:notation}
\end{table*}

\clearpage

\begin{table*}[t]
    \centering
    
    \begin{tabular}{lllrrrrr}
    \toprule
    \textbf{Task} & \textbf{Scale} & \textbf{Dataset} & \textbf{\#Nodes} & \textbf{\#Edges} & \textbf{\#Steps} & \textbf{Surprise} & \textbf{\# Negatives}\\
    \midrule
    \multirow{5}{*}{\shortstack{Link\\(\texttt{tgbl})}}
    & Small  & \texttt{tgbl-wiki}    & 9,227     & 157,474     & 152,757    & 0.108  & 999\\
    & Small  & \texttt{tgbl-review}  & 352,637   & 4,873,540   & 6,865      & 0.987  & 100\\
    & Medium & \texttt{tgbl-coin}    & 638,486   & 22,809,486  & 1,295,720  & 0.120  & 20\\
    & Large  & \texttt{tgbl-comment} & 994,790   & 44,314,507  & 30,998,030 & 0.823  & 20\\
    & Large  & \texttt{tgbl-flight}  & 18,143    & 67,169,570  & 1,385      & 0.024  & 20\\
    \bottomrule
    \end{tabular}

    \caption{Dataset statistics from TGB across link prediction and node classification tasks. Surprise values are taken from \cite{TGB}. \# Negatives is the number of negatives used for one positive link in the MRR calculation.}
    \label{tab:tgbl_stats}
    \end{table*}

\begin{table}[t]
\centering
\begin{tabular}{lrrc}
\toprule
\textbf{Dataset} & \textbf{\#Nodes} & \textbf{\#Edges} & \textbf{\# Negatives}\\
\midrule
\texttt{jodie-reddit}     & 6,509  & 25,470  & 1  \\
\texttt{jodie-wiki}  & 9,227  & 157,474 & 1\\
\texttt{jodie-mooc}       & 7,144  & 411,749 & 1\\
\texttt{jodie-lastfm}     & 1,980  & 1,293,103 & 1\\
\bottomrule
\end{tabular}
\caption{Statistics of \texttt{jodie-datasets} used for link prediction. For these datasets, one negative link is used against one positive link, making it binary classification problem.}
\label{tab:jodie-stats}
\end{table}

\begin{table*}[t]
    \centering
    
    \begin{tabular}{lllrrrc}
    \toprule
    \textbf{Task} & \textbf{Scale} & \textbf{Dataset} & \textbf{\#Nodes} & \textbf{\#Edges} & \textbf{\#Steps} & \textbf{Surprise} \\
    \midrule
    \multirow{4}{*}{\shortstack{Node\\(\texttt{tgbn})}}
    & Small  & \texttt{tgbn-trade}   & 255       & 468,245     & 32         & 0.023  \\
    & Medium & \texttt{tgbn-genre}   & 1,505     & 17,858,395  & 133,758    & 0.005   \\
    & Large  & \texttt{tgbn-reddit}  & 11,766    & 27,174,118  & 21,889,537 & 0.013   \\
    & Large  & \texttt{tgbn-token}   & 61,756    & 72,936,998  & 2,036,524  & 0.014   \\ 
    \bottomrule
    \end{tabular}

    \caption{Dataset statistics from TGB across node classification tasks.  Surprise values are taken from \cite{TGB}.}
    \label{tab:tgbn_stats}
    \end{table*}

\begin{table}[t]
\centering
\begin{tabular}{lrr}
\toprule
\textbf{Dataset} & \textbf{\#Nodes} & \textbf{\#Edges} \\
\midrule
\texttt{DGraphFin}     & 3,700,550  & 4,300,999    \\
\bottomrule
\end{tabular}
\caption{Statistics of \texttt{DGraphFin} used for node classification.}
\label{tab:dgraphfin-stats}
\end{table}

\section{Appendix B: Moment Analysis of Kernel-Weighted Time Encodings}
\label{app:a_moments}

\subsection{B.1 Taylor Expansion of $\cos(\omega \Delta t)$}

We begin by analyzing a commonly used time encoding function, $\cos(\omega \Delta t)$, where $\omega$ is a frequency hyperparameter and $\Delta t$ is the time difference. The Taylor series expansion is given by:
\begin{equation}
\cos(\omega \Delta t) = \sum_{n=0}^{\infty} \frac{(-1)^n (\omega \Delta t)^{2n}}{(2n)!}
\end{equation}

Taking expectation with respect to the inter-arrival distribution $p(\Delta t)$, we obtain:
\begin{equation}
\mathbb{E}_{\Delta t}[\cos(\omega \Delta t)] = \sum_{n=0}^{\infty} \frac{(-1)^n \omega^{2n}}{(2n)!} \mathbb{E}[(\Delta t)^{2n}]
\end{equation}

This reveals that the encoding is only sensitive to even-order moments (e.g., variance, kurtosis), and entirely insensitive to odd-order moments like the mean or skewness.

\subsection{B.2 Introducing Temporal Modulation Kernels}

To enrich moment sensitivity, we consider multiplying the encoding with a temporal modulation kernel $\psi(\Delta t)$, such as:

\begin{itemize}
    \item \textbf{Laplacian kernel:} $\psi(\Delta t) = \exp(-|\Delta t|/\tau)$
\end{itemize}

Consider the kernel-modulated encoding:
\begin{equation}
f(\Delta t) = \psi(\Delta t) \cdot \cos(\omega \Delta t)
\end{equation}

Both $\psi(\Delta t)$ and $\cos(\omega \Delta t)$ can be expressed as Taylor series:
\begin{align}
\psi(\Delta t) &= \sum_{m=0}^{\infty} \frac{(-\lambda)^m}{m!} (\Delta t)^m \quad \text{(e.g., exponential)} \\
\cos(\omega \Delta t) &= \sum_{n=0}^{\infty} \frac{(-1)^n \omega^{2n}}{(2n)!} (\Delta t)^{2n}
\end{align}

Multiplying the two series yields:
\begin{equation}
f(\Delta t) = \left( \sum_{m=0}^{\infty} a_m (\Delta t)^m \right) \cdot \left( \sum_{n=0}^{\infty} b_n (\Delta t)^{2n} \right),
\end{equation}

\begin{equation}
f(\Delta t) = \sum_{k=0}^{\infty} c_k (\Delta t)^k,
\end{equation}
where $a_m = \frac{(-\lambda)^m}{m!}$ and $b_n = \frac{(-1)^n \omega^{2n}}{(2n)!}$ are the series coefficients of the kernel and encoding, respectively. The resulting coefficients $c_k$ arise from all pairs $(m, n)$ satisfying $k = m + 2n$, leading to a series expansion that includes both even and odd powers of $\Delta t$.

Taking expectation:
\begin{equation}
\mathbb{E}_{\Delta t}[f(\Delta t)] = \sum_{k=0}^{\infty} c_k \cdot \mathbb{E}[(\Delta t)^k]
\end{equation}
This confirms that the modulated encoding incorporates contributions from all statistical moments of the inter-arrival distribution $p(\Delta t)$.

\subsection{B.3 Implications for Temporal Graph Models}

This analysis highlights a fundamental limitation of standard encoding functions such as $\cos(\omega \Delta t)$ or $\sin(\omega \Delta t)$: they are inherently blind to half of the moment spectrum. By introducing a modulation kernel $\psi(\Delta t)$, we recover full moment sensitivity, enabling the encoding to reflect both symmetric and asymmetric properties of the temporal distribution.

This hybrid formulation enhances the robustness of attention mechanisms to distributional shifts in $\Delta t$ and better aligns with the non-stationary nature of real-world temporal data. It also offers theoretical justification for the kernel-weighted attention strategy employed in our model.

\section{Appendix C: Dataset Description and Statistics}
\label{appendix:datasets}

This appendix provides full descriptions of the datasets used in our evaluation. 

\subsection{Dynamic Link Prediction} 
For this task, experiments are conducted on eight datasets. Five from TGB \cite{TGB} and three are from JODIE \cite{kumar2019predictingjodie}. Wikipedia dataset is common in both TGB and JODIE. Description of these datasets is as follows:

\noindent \texttt{tgbl-wiki:} This dataset contains information on edits made by users to Wikipedia pages. It forms a bipartite graph with users and wiki pages as nodes. Each edge is associated with features derived from the edits, along with corresponding timestamps.

\noindent  \texttt{tgbl-review:} This dataset contains reviews of Amazon products in the electronics category, collected from 1997 to 2018. Here, users and products form nodes. Ratings range from 1 to 5. It forms a weighted bipartite graph.

\noindent  \texttt{tgbl-coin:} This dataset contains cryptocurrency transaction data~\cite{NEURIPS2022_e245189a_chartlist} from April 1st, 2022 to November 1st, 2022. Nodes represent wallet addresses, and edges capture fund transfers over time.

\noindent  \texttt{tgbl-comment:} This dataset is a Reddit interaction graph where users are nodes and comments represent edges. It spans the years 2005 to 2010.

\noindent  \texttt{tgbl-flight:} This dataset represents a daily international flight network, where nodes are airports and edges correspond to scheduled flights annotated with metadata. Node features include information about the airports. The dataset spans the years 2019 to 2022.

The \texttt{tgbl} dataset statistics is provided in Table \ref{tab:tgbl_stats}.

Additionally, we include following four datasets from JODIE \cite{kumar2019predictingjodie}\footnote{Dataset loader is from: \url{https://github.com/pyg-team/pytorch_geometric/blob/master/torch_geometric/datasets/jodie.py}.} for link prediction. Note that \texttt{tgbl-wiki} and \texttt{jodie-wiki} are the same datasets. Description of the remaining datasets is as follows:

\noindent \texttt{jodie-reddit:} This public dataset consists of one month of user posts made across various subreddits.

\noindent \texttt{jodie-mooc:} This public dataset consists of student actions on a MOOC online course, such as viewing videos, submitting answers, and other interactions.

\noindent \texttt{jodie-lastfm:} This public dataset contains one month of user listening activity, indicating which songs were listened to by which users.

Table \ref{tab:jodie-stats} summarizes the statistics of \texttt{jodie-datasets}.

\subsection{Dynamic Node Classification} The experiments for this task are conducted on five datasets. Four of these are from \cite{TGB}, namely, \texttt{tgbn-trade}, \texttt{tgbn-genre}, \texttt{tgbn-reddit} and \texttt{tgbn-token}. Apart from TGB, we include \texttt{DGraphFin} dataset \cite{DGraphFin}, which contains financial scenarios. Description of these datasets is as follows:

\noindent \texttt{tgbn-trade:} This dataset represents the international agricultural trade network among UN nations from 1986 to 2016, where nodes are countries and edges indicate the annual total trade value of agricultural products.

\noindent \texttt{tgbn-genre:} This is a weighted bipartite interaction network between users and music genres, where edges indicate that a user listened to a genre at a specific time. Edge weights reflect the percentage association of a song with the given genre.

\noindent \texttt{tgbn-reddit:} This is a user–subreddit interaction network where both users and subreddits are nodes. Edges represent user posts on subreddits at specific times, spanning from 2005 to 2019. The task is to predict a user’s interaction frequency with subreddits in the following week.

\noindent \texttt{tgbn-token:} This is a user–token transaction network where both users and cryptocurrency tokens are nodes. Edges represent token transfers from users, with weights indicating the transaction amount. Due to weight disparity, edge weights are log-normalized. The task is to predict a user’s interaction frequency with different tokens in the next week.

\noindent \texttt{dgraph-fin:} A directed, unweighted dynamic graph representing a social network among users of Finvolution Group, where each node is a user and edges denote emergency contact relationships. Edges carry encrypted timestamps indicating when the contact was added. The task is to detect fraudulent users (Class~1) versus normal users (Class~0), using node features and temporal graph structure. 


Statistics for the \texttt{tgbn} datasets are presented in Table~\ref{tab:tgbn_stats}, while those for \texttt{DGraphFin} are shown in Table~\ref{tab:dgraphfin-stats}.

\subsection{Temporal Drift in Inter-Arrival Distributions.} 
Figure~\ref{fig:12_dataset_dist} presents the inter-arrival time distributions $p(\Delta t)$ across 12 temporal graph datasets used in our experiments. We evaluate five link prediction datasets from the TGB benchmark (\texttt{tgbl-wiki}, \texttt{tgbl-review}, \texttt{tgbl-coin}, \texttt{tgbl-comment}, and \texttt{tgbl-flight}) and four node classification datasets (\texttt{tgbn-trade}, \texttt{tgbn-genre}, \texttt{tgbn-token}, and \texttt{tgbn-reddit}) from the same benchmark. To further assess the generalization capability of our method KEAT, we additionally consider four datasets from the JODIE paper—\texttt{jodie-wiki}, \texttt{jodie-mooc}, \texttt{jodie-reddit}, and \texttt{jodie-lastfm}—for link prediction, using the original evaluation setup defined in JODIE. Notably, \texttt{jodie-wiki} and \texttt{tgbl-wiki} represents the same dataset. For each dataset, we compare the inter-arrival distribution $p(\Delta t)$ between the training and validation sets. These visualizations reveal frequent distribution shifts, with validation distributions often exhibiting heavier tails or shifted modes. This motivates our use of temporally robust mechanisms, such as kernel-based attention, which mitigate sensitivity to higher-order temporal statistics that are not consistent across splits.

\subsection{Spectral Density Plots}
The figure \ref{fig:spectral_density} represents the spectral density plots for different datasets on which our experiments were performed. 
Let \( P_f \) denote the probability associated with each frequency component of the signal at the node level, and  
\( M_f \) represent the magnitude of the corresponding frequency component in the fast Fourier Transform (FFT) spectrum at the node level.
Then the formulation for the entropy is given as $H(P) = -\sum_{f} P_f \log P_f $.

We arrive at there values of spectral entropy on a node level and plot the kernel density estimation (KDE) plot for all nodes of each dataset. We first take the timestamps of interaction on a node wise level for a particular dataset under consideration. Once the series of timestamps are noted, the corresponding FFT is taken at a node wise level. These FFTs represent magnitudes/ or intensities of different signals. Here the $P_f$ for a particular signal is calculated as $P_f = M_f/\sum(M_f)$. Once this $P_f$ is calculated for each signal for each node we calculate spectral entropy for each node as mentioned in the expression before. Ideally a purely periodic node would have only one signal leading to 0 spectral entropy. Hence, lower is the value of spectral entropy greater is the degree of periodicity.

Lower entropy reflects greater periodicity in the data, whereas higher entropy denotes increased randomness. Based on the spectral density plots in Figure\ref{fig:spectral_density}, the datasets can be ranked by their level of periodicity, from most to least periodic, as follows: \texttt{tgbl-review}, \texttt{tgbl-coin}, \texttt{tgbl-wiki}, \texttt{tgbl-comment}, and \texttt{tgbl-flight}. This observation shows that many datasets do not display strong periodic patterns. Therefore, relying only on time encodings, which primarily aim to capture periodic behavior, may not be adequate. This motivates the design of KEAT, which enables effective temporal modeling by leveraging edge-aware attention to capture both periodic and non-periodic temporal dynamics.

\begin{figure}[t]
    \centering
    \includegraphics[width=0.98\linewidth]{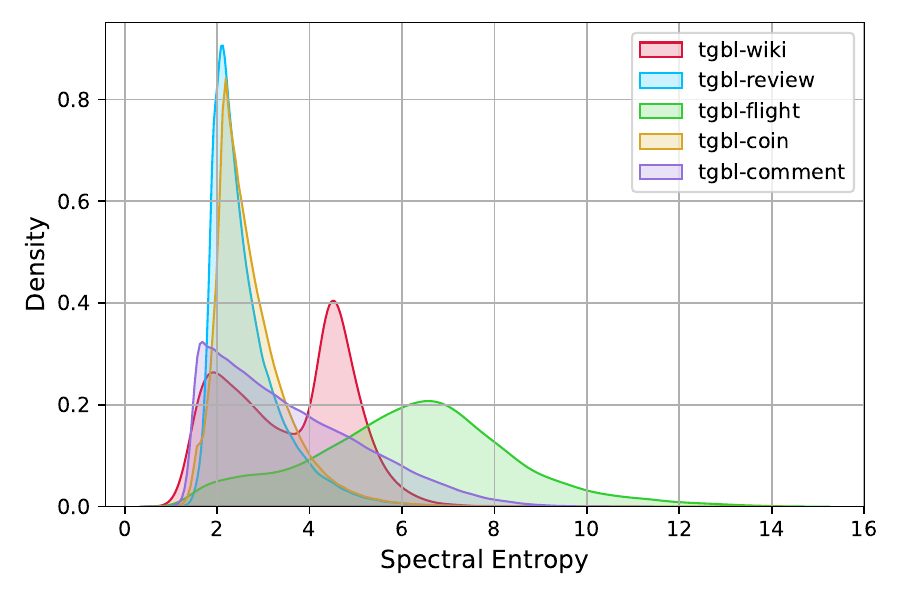}
    \caption{Spectral density for \texttt{tgbl-datasets} illustrating nature of periodicity.}
    \label{fig:spectral_density}
\end{figure}

\section{Appendix D: DyGFormer}

In DyGFormer~\cite{dygformer2023}, for each interaction $(i, j, t_{ij})$, the temporal neighborhoods $\mathcal{N}(i)$ and $\mathcal{N}(j)$ of length $L$ each, are divided into $\frac{L}{L_p}$ non-overlapping patches, each containing $L_p (< L )$ interactions. For each patch $\mathcal{P} = \{(u_k, v_k, t_k)\}{k=1}^{L_p}$, node features, edge features, and time encodings across the patch are concatenated as $\mathbf{h}{\text{flat}} \in \mathbb{R}^{L_p. d}$, $\mathbf{e}{\text{flat}} \in \mathbb{R}^{L_p . d_e}$, and $\boldsymbol{\phi}{\text{flat}} \in \mathbb{R}^{L_p. d_t}$. These are projected independently as :
\[
\tilde{\mathbf{h}} = \mathbf{W}_h \mathbf{h}_{\text{flat}}, \quad
\tilde{\mathbf{e}} = \mathbf{W}_e \mathbf{e}_{\text{flat}}, \quad
\tilde{\boldsymbol{\phi}} = \mathbf{W}_t \boldsymbol{\phi}_{\text{flat}},
\]
where $\mathbf{W}_h \in \mathbb{R}^{d' \times L_p. d}$, $\mathbf{W}_e \in \mathbb{R}^{d' \times L_p. d_e}$, and $\mathbf{W}_t \in \mathbb{R}^{d' \times L_p. d_t}$. The final representation for each patch is
\[
\mathbf{z}_{\mathcal{P}} = [\tilde{\mathbf{h}} \| \tilde{\mathbf{e}} \| \tilde{\boldsymbol{\phi}}] \in \mathbb{R}^{3d'}.
\]

To incorporate KEAT, for each patch we define a representative patch timestamp $t_{\text{patch}} = \frac{1}{L_p} \sum_{k=1}^{L_p} t_k$, and scale the query and key vectors as:
\[
\mathbf{q} = \exp(t_{\text{patch}}) \cdot \mathbf{W}_q \mathbf{z}_{\mathcal{P}}, \quad
\mathbf{k} = \exp(-t_{\text{patch}}) \cdot \mathbf{W}_k \mathbf{z}_{\mathcal{P}},
\]
leading to an attention score where the edge features are modulated by a temporal kernel $\exp(t_{\text{patch}}^{(p)} - t_{\text{patch}}^{(q)})$.
This introduces a directional temporal bias by asymmetrically scaling the queries and keys based on patch timestamps. As a result, the attention mechanism becomes sensitive to temporal dynamics without modifying DyGFormer's architecture.

\section{Appendix E: Impact of Temporal Kernels on Higher-Order Moments} 

In TGNNs, time encodings such as \( f(t) = \sum_{n=0}^\infty c_n t^n \) are often used to capture fine-grained timing patterns. However, higher-order moments \( \mathbb{E}[t^n] \) can become unstable under temporal distribution shift (e.g., when test timestamps differ from training), especially in long-tailed distributions (see Figure~\ref{fig:12_dataset_dist}). To mitigate this, we introduce a temporal kernel \( K(t_i, t_j) = \exp\left(-\frac{|t_i - t_j|}{C}\right) \) that effectively attenuates large \( t \), suppressing higher-order moments in time encoding functions.

\begin{theorem}
Let \( p(t) \) be a probability density function supported on \( [0, \infty) \) such that \( \mathbb{E}[t^n] < \infty \) and \( \mathbb{E}[\psi(t) t^n] < \infty \) for all \( n \geq 0 \), where \( \psi(t) \) is a non-negative, monotonically decreasing kernel function. Let \( \phi(t) = \sum_{n=0}^{\infty} c_n t^n \) be an analytic time encoding function with coefficients \( \{c_n\} \). Define the kernel-to-base ratio for each moment order \( n \) as:
\[
R_n = \frac{\mathbb{E}[\psi(t) t^n]}{\mathbb{E}[t^n]}, \quad \text{with } R_0 = \mathbb{E}[\psi(t)].
\]
Then, the sequence \( \{R_n\}_{n=0}^\infty \) is strictly decreasing and converges to zero: \( \lim_{n \to \infty} R_n = 0 \). As a result, the expected kernel-weighted time encoding \( \mathbb{E}[\psi(t) \cdot \phi(t)] \) becomes increasingly dominated by lower-order terms \( \phi \). This formulation naturally extends to expectations of the form \( \mathbb{E}[\psi(t) \cdot \bar{\mathbf{e}}_{ij}] \), as used in Equations \ref{eq:modulated_attention} and \ref{eq:modulated_update}.
\end{theorem}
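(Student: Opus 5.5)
The plan is to read $R_n$ as the expectation of $\psi$ under a family of size-biased (tilted) distributions, and then get both monotonicity and the limit from stochastic ordering. For each $n\ge 0$ define the tilted density
\[
q_n(t) = \frac{t^n p(t)}{\mathbb{E}[t^n]}, \qquad t\ge 0,
\]
which is well defined because $0<\mathbb{E}[t^n]<\infty$. Here $\mathbb{E}[t^n]>0$ holds as soon as $p$ is not a point mass at $0$. By construction $R_n = \mathbb{E}_{q_n}[\psi(t)]$, and $q_0=p$ recovers $R_0=\mathbb{E}[\psi(t)]$.

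\textbf{Monotonicity.} The likelihood ratio $q_{n+1}(t)/q_n(t) \propto t$ is increasing in $t$. Hence the family $\{q_n\}$ has the monotone likelihood ratio property, and $q_{n+1}$ stochastically dominates $q_n$. Since $\psi$ is non-increasing, this already gives $R_{n+1}\le R_n$. To make the step concrete, I would write
\[
R_n - R_{n+1} = -\frac{\operatorname{Cov}_{q_n}\!\bigl(t,\psi(t)\bigr)}{\mathbb{E}_{q_n}[t]}
\]
and apply Chebyshev's association inequality: a non-decreasing and a non-increasing function of the same variable have non-positive covariance. Strict inequality requires that $t$ is not $q_n$-a.s.\ constant and that $\psi$ is strictly decreasing on a set of positive mass. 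I will state this as the standing interpretation of ``monotonically decreasing'' (strictly, on the support of $p$), which both the Laplacian and RBF kernels satisfy.

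\textbf{Limit.} I expect $\lim_{n\to\infty} R_n = 0$ to be the main obstacle, because it is false without extra hypotheses. For example, if $p$ is supported on $[0,B]$ with $\psi(B)>0$, then $R_n\to\psi(B)>0$. So the plan is to use two facts: that $p$ has unbounded support (the intended reading of ``supported on $[0,\infty)$''), and that $\psi(t)\to 0$ as $t\to\infty$, which holds for the Laplacian and RBF kernels. Given $\varepsilon>0$, choose $T$ with $\psi(t)<\varepsilon$ for $t>T$, and split
\[
R_n \le \psi(0)\,\frac{\mathbb{E}[t^n\mathbf 1\{t\le T\}]}{\mathbb{E}[t^n]} + \varepsilon .
\]
For the first term, use $\mathbb{E}[t^n\mathbf 1\{t\le T\}]\le T^n$ and $\mathbb{E}[t^n]\ge (2T)^n\,\mathbb{P}(t>2T)$. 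Unbounded support gives $\mathbb{P}(t>2T)>0$, so the ratio is at most $2^{-n}/\mathbb{P}(t>2T)\to 0$. Hence $\limsup_n R_n\le\varepsilon$ for every $\varepsilon>0$.

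\textbf{Consequence for $\phi$.} Assume $\sum_n |c_n|\,\mathbb{E}[t^n]<\infty$, which is needed for $\mathbb{E}[\phi(t)]$ to be expanded termwise at all. Fubini/dominated convergence (using $0\le\psi\le\psi(0)$) then gives
\[
\mathbb{E}[\psi(t)\phi(t)] = \sum_{n} c_n\,R_n\,\mathbb{E}[t^n].
\]
Compared with $\mathbb{E}[\phi(t)]=\sum_n c_n\mathbb{E}[t^n]$, the $n$-th term is multiplied by the factor $R_n$, which decreases to $0$. So the relative weight of high-order moments is suppressed, and the weighting factors are ordered so that lower-order terms dominate. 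The extension to $\mathbb{E}[\psi(t)\,\bar{\mathbf e}_{ij}]$ follows by applying the same argument coordinatewise to the $\phi(\Delta t)$ block; the raw $\mathbf e_{ij}$ block is independent of $t$ and is only scaled by $R_0$.
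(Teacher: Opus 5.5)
Your proposal is correct and follows the same route as the paper. The paper's proof also rests on the comparison $\mathbb{E}[t^{n+1}\psi(t)]/\mathbb{E}[t^{n}\psi(t)] < \mathbb{E}[t^{n+1}]/\mathbb{E}[t^{n}]$, justified only by saying $\psi$ ``downweights larger values,'' and on the bare assertion that $\psi$ decays faster than any polynomial. Your tilted-density covariance identity with Chebyshev's association inequality, and your splitting bound $R_n \le \psi(0)\,2^{-n}/\mathbb{P}(t>2T)+\varepsilon$, turn these sketches into actual proofs and show that $\psi(t)\to 0$ suffices without superpolynomial decay.

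The hypotheses you make explicit are genuinely needed and are silently assumed by the paper: strict decrease of $\psi$ on the support, unbounded support of $p$, $\psi(t)\to 0$, and $\sum_n |c_n|\,\mathbb{E}[t^n]<\infty$ for the termwise expansion. Your bounded-support example shows this, as does a kernel such as $\tfrac12(1+e^{-t})$, for which $R_n\to\tfrac12$.
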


\paragraph{\textbf{Proof.}}
To show monotonicity, consider the ratio:
\[
\frac{R_{n+1}}{R_n} = \frac{\mathbb{E}[t^{n+1} \psi(t)] \cdot \mathbb{E}[t^n]}{\mathbb{E}[t^n \psi(t)] \cdot \mathbb{E}[t^{n+1}]}.
\]
Both numerator and denominator are expectations involving \( t^k \) under two different distributions: one weighted by \( \psi(t) \) and one unweighted. Since \( \psi(t) \) is a non-negative, strictly decreasing function, it downweights larger values of \( t \), leading to:
\[
\frac{\mathbb{E}[t^{n+1} \psi(t)]}{\mathbb{E}[t^n \psi(t)]} < \frac{\mathbb{E}[t^{n+1}]}{\mathbb{E}[t^n]}.
\]
Hence, \( R_{n+1} < R_n \).

To prove convergence to zero, observe that \( \psi(t) \leq \psi(0) \) and decays faster than any polynomial growth in \( t \). Thus:
\[
\lim_{n \to \infty} \frac{\mathbb{E}[t^n \psi(t)]}{\mathbb{E}[t^n]} = 0,
\]
as the numerator decays faster than the denominator. Therefore, \( \lim_{n \to \infty} R_n = 0 \).

This implies that for the analytic expansion \( \phi(t) = \sum c_n t^n \), we have:
\[
\mathbb{E}[\psi(t) \cdot \phi(t)] = \sum_{n=0}^\infty c_n \mathbb{E}[\psi(t) t^n] = \sum_{n=0}^\infty c_n R_n \mathbb{E}[t^n],
\]
where the coefficients \( R_n \) decay rapidly, suppressing higher-order moments.

A visual illustration of higher-order moment suppression is shown in Figure \ref{fig:moment_decay}.

\begin{figure}[t]
    \centering
    \includegraphics[width=0.98\linewidth]{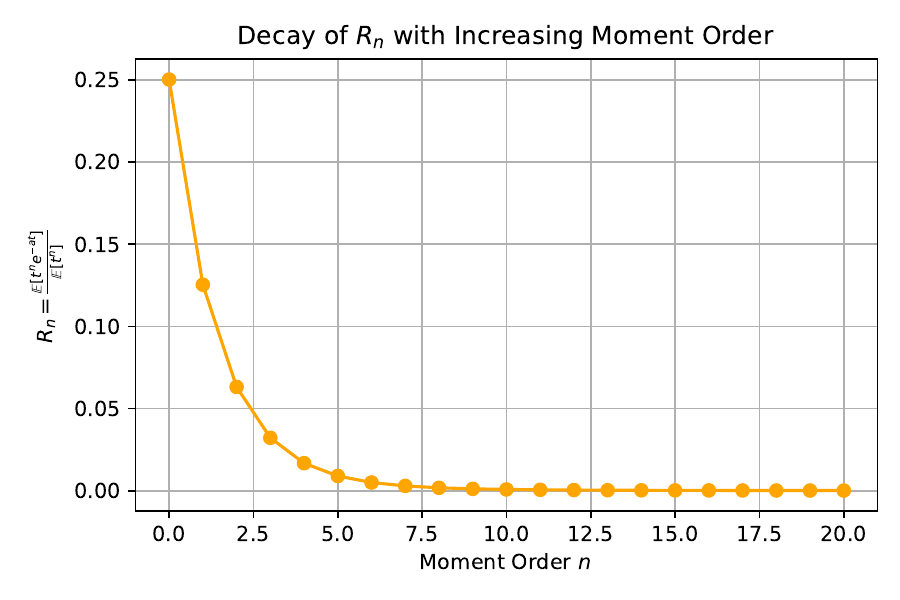}
    \caption{Decay of the ratio $R_n = \frac{\mathbb{E}[t^n e^{-a t}]}{\mathbb{E}[t^n]}$ for increasing moment order $n$, where $t \sim \text{Exp}(1)$. This demonstrates how higher-order moments are increasingly suppressed under exponential kernel weighting.}
    \label{fig:moment_decay}
\end{figure}

\subsection{Variance Reduction in KEAT}

\begin{theorem}[Variance Reduction via Temporal Kernel]
In the KEAT formulation, the attention logits for node \( i \) with neighbor \( j \) take the form:
\[
s_{ij}^{(K)} := \mathbf{q}_i^\top \left( \mathbf{k}_j + \psi(\Delta t_{ij}) \cdot \mathbf{k}^{(\mathrm{edge})}_{ij} \right)
= X_j + \psi_{ij} Y_j,
\]
where:
\begin{itemize}
    \item \( X_j := \mathbf{q}_i^\top \mathbf{k}_j \) is the node-to-node interaction term,
    \item \( Y_j := \mathbf{q}_i^\top \mathbf{k}^{(\mathrm{edge})}_{ij} \) captures edge-specific contributions,
    \item \( \psi(\cdot) \) is a temporal kernel (e.g., exponential decay), and \( \psi_{ij} := \psi(\Delta t_{ij}) \in (0, 1] \) is its value for the edge from \( j \) to \( i \).
\end{itemize}

Assume \( \{(X_j, Y_j)\} \) are i.i.d.\ with variances \( \sigma_X^2 \), \( \sigma_Y^2 \), and correlation \( \rho \in [-1, 1] \).  
Then the variance of the kernel-modulated logits satisfies:
\[
\mathrm{Var}[s_{ij}^{(K)}] \leq \mathrm{Var}[s_{ij}^{(0)}] \quad
\text{whenever} \quad \sigma_Y (1 + \psi_{ij}) \geq 2 \sigma_X.
\]

Thus, the temporal kernel suppresses variance from edge-based noise, yielding more stable attention logits.
\end{theorem}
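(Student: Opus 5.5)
The plan is a direct second-moment computation, treating the kernel value $\psi_{ij}\in(0,1]$ as a fixed (deterministic) scalar for the edge under consideration. The baseline logit $s_{ij}^{(0)}=X_j+Y_j$ corresponds to $\psi\equiv 1$ in Eq.~\ref{eq:standard_attention}, and the kernel-modulated logit is $s_{ij}^{(K)}=X_j+\psi_{ij}Y_j$. The i.i.d.\ assumption across $j$ is only needed so that $\sigma_X,\sigma_Y,\rho$ do not depend on $j$; the argument itself is pointwise in $j$.

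First, expand both variances by bilinearity of covariance, using $\mathrm{Cov}(X_j,Y_j)=\rho\sigma_X\sigma_Y$:
\[
\mathrm{Var}[s_{ij}^{(0)}]=\sigma_X^2+\sigma_Y^2+2\rho\sigma_X\sigma_Y,\qquad
\mathrm{Var}[s_{ij}^{(K)}]=\sigma_X^2+\psi_{ij}^2\sigma_Y^2+2\psi_{ij}\rho\sigma_X\sigma_Y .
\]
Next, subtract them and factor out $(1-\psi_{ij})$:
\[
\mathrm{Var}[s_{ij}^{(0)}]-\mathrm{Var}[s_{ij}^{(K)}]=(1-\psi_{ij})\,\sigma_Y\bigl[(1+\psi_{ij})\sigma_Y+2\rho\sigma_X\bigr].
\]
Since $\psi_{ij}\le 1$ and $\sigma_Y\ge 0$, the prefactor $(1-\psi_{ij})\sigma_Y$ is nonnegative. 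It therefore suffices to show that the bracket is nonnegative.

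Finally, control the bracket uniformly in the unknown correlation. Since $\rho\ge -1$, we have $2\rho\sigma_X\ge -2\sigma_X$. Hence the bracket is at least $(1+\psi_{ij})\sigma_Y-2\sigma_X$, which is $\ge 0$ exactly under the stated hypothesis $\sigma_Y(1+\psi_{ij})\ge 2\sigma_X$. This yields $\mathrm{Var}[s_{ij}^{(K)}]\le \mathrm{Var}[s_{ij}^{(0)}]$. The degenerate cases $\psi_{ij}=1$ and $\sigma_Y=0$ give equality, so they are covered.

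The only delicate point is interpretive rather than technical. One must fix $\psi_{ij}$, i.e.\ condition on $\Delta t_{ij}$. If $\psi$ were treated as random and correlated with $Y_j$, extra covariance terms would appear. I would state this conditioning explicitly at the outset. I would also remark that the hypothesis is the worst case $\rho=-1$: for $\rho\ge 0$ the reduction holds unconditionally.
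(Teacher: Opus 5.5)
Your proposal is correct and follows the paper's proof essentially step for step: you expand both variances, factor the difference as $(1-\psi_{ij})\bigl[\sigma_Y^2(1+\psi_{ij})+2\rho\sigma_X\sigma_Y\bigr]$, and bound the bracket using the worst case $\rho=-1$. Your small refinements are sound and make the argument slightly cleaner than the paper's: you pull out the extra factor $\sigma_Y$, you use $1-\psi_{ij}\ge 0$ rather than the paper's $>0$ (so $\psi_{ij}=1$ and $\sigma_Y=0$ are handled explicitly), and you state that $\psi_{ij}$ is held fixed.
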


\paragraph{\textbf{Proof.}}
Let the kernel-modulated and unmodulated logits be:
\[
s_{ij}^{(K)} = X_j + \psi_{ij} Y_j, \quad s_{ij}^{(0)} = X_j + Y_j,
\]
where \( \psi_{ij} := \psi(\Delta t_{ij}) \in (0, 1] \).

We compute the variances under the assumption that \( (X_j, Y_j) \) are i.i.d.\ with variances \( \sigma_X^2, \sigma_Y^2 \), and correlation \( \rho \in [-1, 1] \):

\begin{align*}
\mathrm{Var}[s_{ij}^{(K)}] 
    &= \mathrm{Var}[X_j + \psi_{ij} Y_j] \\
    &= \sigma_X^2 + \psi_{ij}^2 \sigma_Y^2 + 2 \psi_{ij} \rho \sigma_X \sigma_Y \\[10pt]
\mathrm{Var}[s_{ij}^{(0)}] 
    &= \sigma_X^2 + \sigma_Y^2 + 2 \rho \sigma_X \sigma_Y
\end{align*}

Let \( \Delta := \mathrm{Var}[s_{ij}^{(0)}] - \mathrm{Var}[s_{ij}^{(K)}] \). Then:

\begin{align*}
\Delta &= (\sigma_X^2 + \sigma_Y^2 + 2\rho \sigma_X \sigma_Y) \\
&\quad - (\sigma_X^2 + \psi_{ij}^2 \sigma_Y^2 + 2 \psi_{ij} \rho \sigma_X \sigma_Y) \\
&= \sigma_Y^2 (1 - \psi_{ij}^2) + 2 \rho \sigma_X \sigma_Y (1 - \psi_{ij}).
\end{align*}

Factoring:
\[
\Delta = (1 - \psi_{ij}) \left[ \sigma_Y^2 (1 + \psi_{ij}) + 2 \rho \sigma_X \sigma_Y \right].
\]

Define the bracketed expression as:
\[
B := \sigma_Y^2 (1 + \psi_{ij}) + 2 \rho \sigma_X \sigma_Y.
\]

Since \( 1 - \psi_{ij} > 0 \), the sign of \( \Delta \) depends on \( B \). In the worst case (\( \rho = -1 \)), we have:
\[
B_{\min} = \sigma_Y^2 (1 + \psi_{ij}) - 2 \sigma_X \sigma_Y.
\]

Thus, \( \Delta \geq 0 \) (i.e., variance is reduced) whenever:
\[
\sigma_Y (1 + \psi_{ij}) \geq 2 \sigma_X.
\]

This completes the proof.

\subsection{Extension: Correlated Neighbor Terms}

\noindent
Even when the i.i.d.\ assumption on neighbor pairs \( (X_j, Y_j) \) is relaxed, we can still show that incorporating the temporal kernel leads to a reduction in the variance of the aggregated attention logits under mild conditions. In realistic temporal graphs, interactions from neighbors may exhibit correlations due to structural proximity or temporal locality. The following extension demonstrates that variance reduction continues to hold so long as these correlations are non-negative.

Let the attention logits in KEAT be given by
\[
s_{ij}^{(K)} := X_j + \psi_{ij} Y_j,
\]
where \( \psi_{ij} := \psi(\Delta t_{ij}) \in (0,1] \), and define the neighborhood-averaged logit:
\[
\bar{s}_i^{(K)} := \frac{1}{|\mathcal{N}(i)|} \sum_{j \in \mathcal{N}(i)} s_{ij}^{(K)}.
\]

Assume that for all \( j \neq l \), the cross-covariances satisfy:
\[
\mathrm{Cov}[Y_j, Y_l] \geq 0, \quad \mathrm{Cov}[X_j, Y_l] \geq 0, \quad \mathrm{Cov}[Y_j, X_l] \geq 0.
\]

Then,
\[
\mathrm{Var}[\bar{s}_i^{(K)}] \leq \mathrm{Var}[\bar{s}_i^{(0)}].
\]

\paragraph{\textbf{Proof.}}
Let \( s_{ij}^{(0)} := X_j + Y_j \). The variance of the neighborhood-averaged logit is:
\[
\mathrm{Var}[\bar{s}_i^{(K)}] = \frac{1}{n^2} \sum_{j,l} \mathrm{Cov}[X_j + \psi_{ij} Y_j,\ X_l + \psi_{il} Y_l].
\]

Taking the difference from the unmodulated case and expanding yields:

\begin{align*}
\Delta &:= \mathrm{Var}[\bar{s}_i^{(0)}] - \mathrm{Var}[\bar{s}_i^{(K)}] \\
&= \frac{1}{n^2}\sum_{j,l}\bigl[(1 - \psi_{ij}\psi_{il})\mathrm{Cov}[Y_j,Y_l] \\
&\quad + (1 - \psi_{ij})\mathrm{Cov}[X_j,Y_l] \\
&\quad + (1 - \psi_{il})\mathrm{Cov}[Y_j,X_l]\bigr].
\end{align*}

Each term in the sum is non-negative under the stated covariance conditions and \( \psi_{ij} \in (0,1] \), implying \( \Delta \geq 0 \).

\paragraph{Remark.}
This generalization shows that the variance reduction effect of KEAT holds even in the presence of correlated or temporally clustered neighbors, so long as the correlations are non-negative. This setting captures realistic temporal graph phenomena like structurally similar interactions. The temporal kernel acts as a soft attention prior that dampens noisy or high-variance edge contributions, improving the stability of attention logits under broader assumptions.

\begin{figure}[t]
\centering
\begin{tikzpicture}
\begin{axis}[
    width=0.98\linewidth,
    height=6cm,
    xlabel={Time difference $\Delta t$},
    ylabel={Kernel weight},
    legend style={at={(0.95,0.95)}, anchor=north east, font=\small},
    grid=major,
    xmin=0, xmax=10,
    ymin=0, ymax=1.05,
    thick,
    samples=200,
    domain=0:10,
    axis lines=left
]

\addplot[red, dashed, thick] {exp(-x / 2)};
\addlegendentry{Laplacian: $e^{-\Delta t / \sigma}$}

\addplot[blue, very thick] {exp(-x^2 / 4)};
\addlegendentry{RBF: $e^{-\Delta t^2 / \sigma^2}$}

\addplot[green!60!black, dotted, thick] {0.8*exp(-0.5*x) + 0.2*sin(deg(x))/2};
\addlegendentry{MLP: learned decay $f(\Delta t; \theta)$}

\end{axis}
\end{tikzpicture}
\caption{Comparison of temporal kernels used in KEAT for edge modulation. Laplacian and RBF apply fixed exponential decays, while the MLP-based kernel learns a flexible decay pattern from data, allowing richer temporal biasing.}
\label{fig:kernels}
\end{figure}
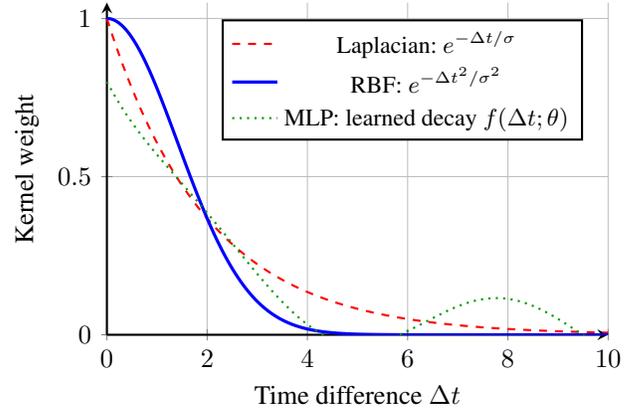

\subsection{Different Types of Kernels}
Figure~\ref{fig:kernels} illustrates the behavior of three temporal kernels used in KEAT to modulate edge time interactions. The Laplacian and RBF kernels apply fixed decay functions. The Laplacian kernel exhibits an exponential drop with a linear rate of decay, while the RBF kernel shows a sharper and symmetric decay. These fixed forms are effective for modeling simple recency-based interactions. In contrast, the MLP-based kernel learns its decay pattern directly from data, allowing it to capture complex and potentially non-monotonic temporal dependencies. This learnable flexibility enables KEAT to better adapt to diverse temporal structures in dynamic graphs.

\section{Appendix F: Baselines}

\begin{table*}[ht]
\centering
\small
\rowcolors{2}{white}{gray!5}
\begin{tabular}{lccccccc}
\toprule
\textbf{Method} & \texttt{wiki} & \texttt{review} & \texttt{coin} & \texttt{comment} & \texttt{flight} & \textbf{Avg Rank $\downarrow$} & \textbf{Ranking MRR $\uparrow$} \\
\midrule
TNCN & 3 & 3 & 1 & 1 & 1 & 1.8 & 0.733 \\
TGN & 10 & 5 & 4 & 4 & 2 & 5.0 & 0.260 \\
CTAN & 5 & 2 & 3 & 2 & \cellcolor{red!20}17 & 5.8 & 0.318 \\
DyGFormer & 1 & 7 & 2 & 3 & \cellcolor{red!20}17 & 6.0 & 0.407 \\
$\text{EdgeBank}_{\text{tw}}$ & 6 & 15 & 5 & 6 & 4 & 7.2 & 0.170 \\
DyRep & 16 & 8 & 6 & 5 & 3 & 7.6 & 0.178 \\
$\text{EdgeBank}_\ensuremath{\infty}$ & 7 & 16 & 7 & 7 & 5 & 8.4 & 0.138 \\
NAT & 2 & 6 & \cellcolor{red!20}17 & \cellcolor{red!20}17 & \cellcolor{red!20}17 & 11.8 & 0.169 \\
CAWN & 4 & 10 & \cellcolor{red!20}17 & \cellcolor{red!20}17 & \cellcolor{red!20}17 & 13.0 & 0.105 \\
GraphMixer & 15 & 1 & \cellcolor{red!20}17 & \cellcolor{red!20}17 & \cellcolor{red!20}17 & 13.4 & 0.249 \\
EGCNo (UTG) & 9 & 9 & \cellcolor{red!20}17 & \cellcolor{red!20}17 & \cellcolor{red!20}17 & 13.8 & 0.080 \\
TGAT & 14 & 4 & \cellcolor{red!20}17 & \cellcolor{red!20}17 & \cellcolor{red!20}17 & 13.8 & 0.100 \\
HTGN (UTG) & 8 & 13 & \cellcolor{red!20}17 & \cellcolor{red!20}17 & \cellcolor{red!20}17 & 14.4 & 0.076 \\
GCN (UTG) & 12 & 12 & \cellcolor{red!20}17 & \cellcolor{red!20}17 & \cellcolor{red!20}17 & 15.0 & 0.069 \\
TCL & 13 & 11 & \cellcolor{red!20}17 & \cellcolor{red!20}17 & \cellcolor{red!20}17 & 15.0 & 0.069 \\
GCLSTM (UTG) & 11 & 14 & \cellcolor{red!20}17 & \cellcolor{red!20}17 & \cellcolor{red!20}17 & 15.2 & 0.068 \\
\bottomrule
\end{tabular}
\caption{Ranking of link prediction methods on \texttt{TGB} benchmark datasets. Missing scores are penalized with rank 17 and highlighted in red. Numbers are taken from TGB link prediction leaderboard \cite{TGB} as on 4th Aug 2025.}
\label{tab:ranking_penalty}
\end{table*}

Table \ref{tab:ranking_penalty} describes the ranks for each baseline on different \texttt{tgbl} link prediction datasets. We select TNCN, TGN, CTAN, and DyGFormer as our primary baselines due to their consistently strong performance across multiple datasets in the TGB benchmark\footnote{\url{https://tgb.complexdatalab.com/docs/leader_linkprop/}}. These models represent a diverse set of temporal graph architectures and provide competitive results in both average ranking and ranking MRR. In contrast, several other methods exhibit poor generalization, scalability (highlighted by red cells in Table \ref{tab:ranking_penalty}) or lack robustness across datasets, often missing scores and receiving penalized ranks. To maintain clarity and focus in our evaluations, we omit these underperforming models from detailed analysis.

\section{Appendix G: Additional Experimental Details and Results}

\subsection{Implementation Details}
We employ the Adam optimizer with a learning rate of $1 \times 10^{-4}$, a batch size of 200, and dimensionality of 100 for embeddings, memory, and time encodings. We apply early stopping with a tolerance of $1 \times 10^{-6}$ and a patience of 5 epochs. For the \textsc{Wiki} dataset, we increase the number of training epochs to 200 and set the early stopping patience to 25, owing to its smaller size. All models are trained over 5 random seeds (1 through 5), and we report the mean and standard deviation of the evaluation metrics. All experiments are conducted on an Ubuntu system equipped with an NVIDIA A100 80GB GPU and 128\,GB of RAM.

In KEAT-TGN, the official implementation of TGN\footnote{\url{https://github.com/shenyangHuang/TGB/tree/main}} in TGB \cite{TGB} is used. For DyGFormer, we use the official implementation provided by the authors\footnote{\url{https://github.com/yule-BUAA/DyGLib_TGB}}.
For the {JODIE} datasets, we use the implementation available in \texttt{PyTorch Geometric}\footnote{\url{https://github.com/pyg-team/pytorch_geometric/blob/master/torch_geometric/datasets/jodie.py}}. For the DGraphFin dataset, we adopt the publicly available implementation of GEARSage\footnote{\url{https://github.com/storyandwine/GEARSage-DGraphFin}}.

\paragraph{Hyperparameters.} As KEAT is designed to be lightweight and plug-and-play, we retain the exact hyperparameters used by the respective baseline models without any additional tuning. This ensures a fair comparison and highlights the compatibility of KEAT with existing architectures. The only additional hyperparameter introduced by KEAT is the training set standard deviation ($\sigma$), which is used for kernel modulation and is directly computed and reported in the code for reproducibility.

\begin{table}[t]
\centering
\small

\begin{tabular}{cccc}
\toprule
\textbf{Node Mod.} & \textbf{Edge Mod.} & \textbf{Val MRR} & \textbf{Test MRR} \\
\midrule
\cmark & \xmark  & 0.374 \scriptsize{$\pm$ 0.163} & 0.344 \scriptsize{$\pm$ 0.169} \\
\cmark & \cmark & 0.364 \scriptsize{$\pm$ 0.021} & 0.360 \scriptsize{$\pm$ 0.024} \\
\xmark  & \cmark & \textbf{0.515} \scriptsize{$\pm$ 0.053} & \textbf{0.474} \scriptsize{$\pm$ 0.031} \\
\xmark  & \xmark  & 0.411 \scriptsize{$\pm$ 0.074} & 0.391 \scriptsize{$\pm$ 0.064} \\
\bottomrule
\end{tabular}

\caption{Ablation of kernel modulation on nodes and edges. Results are reported as MRR (mean $\pm$ std) on the \texttt{tgbl-wiki} dataset.}
\label{tab:node_edge_modulation}
\end{table}

\subsection{Effect of Node and Edge Modulation}
We analyze the contribution of kernel-based modulation on node vs. edge representations (Table~\ref{tab:node_edge_modulation}, \texttt{tgbl-wiki}). Edge modulation alone yields the best performance: test MRR of \textbf{0.474} {\scriptsize$\pm$ 0.031}. In contrast, modulating only node features results in lower performance and higher variance. Combining both is better than node-only, but still inferior to edge-only. These results suggest that temporal modulation at the edge level is more effective, likely because edges directly influence message passing and modulate attention scores.

\begin{figure}[ht]
    \centering
    \includegraphics[width=0.9\linewidth]{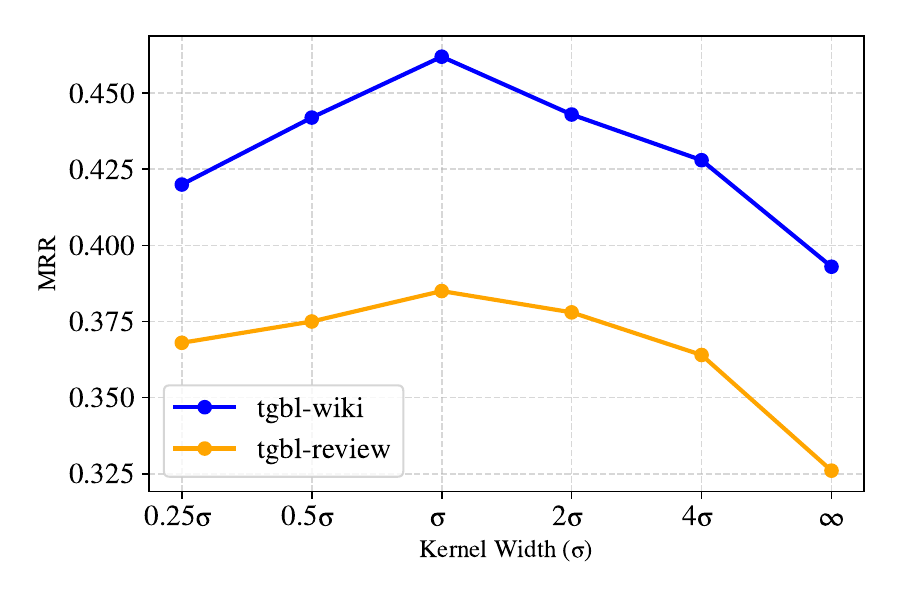}
    \caption{Kernel Width $\sigma$ sensitivity for \texttt{tgbl-wiki}}
    \label{fig:wiki_kernel_width}
\end{figure}

\subsection{Kernel Width Sensitivity}
In Figure \ref{fig:wiki_kernel_width}, we analyze the robustness of KEAT under the Laplacian kernel by varying the kernel width $\lambda$ relative to the standard deviation $\sigma$ of time differences in the training set. As shown in the table, the mean reciprocal rank (MRR) improves as $\lambda$ increases, peaking when $\lambda = \sigma$ with values of 0.474 on \texttt{wiki} and 0.385 on \texttt{review}. Both smaller ($0.25\sigma$, $0.5\sigma$) and larger ($2\sigma$, $4\sigma$, $\infty$) values lead to lower MRR. We have used $\sigma > 5000$ for $\infty$ in the code. This suggests that setting $\lambda$ close to $\sigma$ provides an effective balance between emphasizing recent interactions and retaining longer-term dependencies, showcasing KEAT’s robustness to kernel width choice.

\subsection{Understanding Ranking in MRR}
\begin{figure}[ht]
    \centering
    \includegraphics[width=0.9\linewidth]{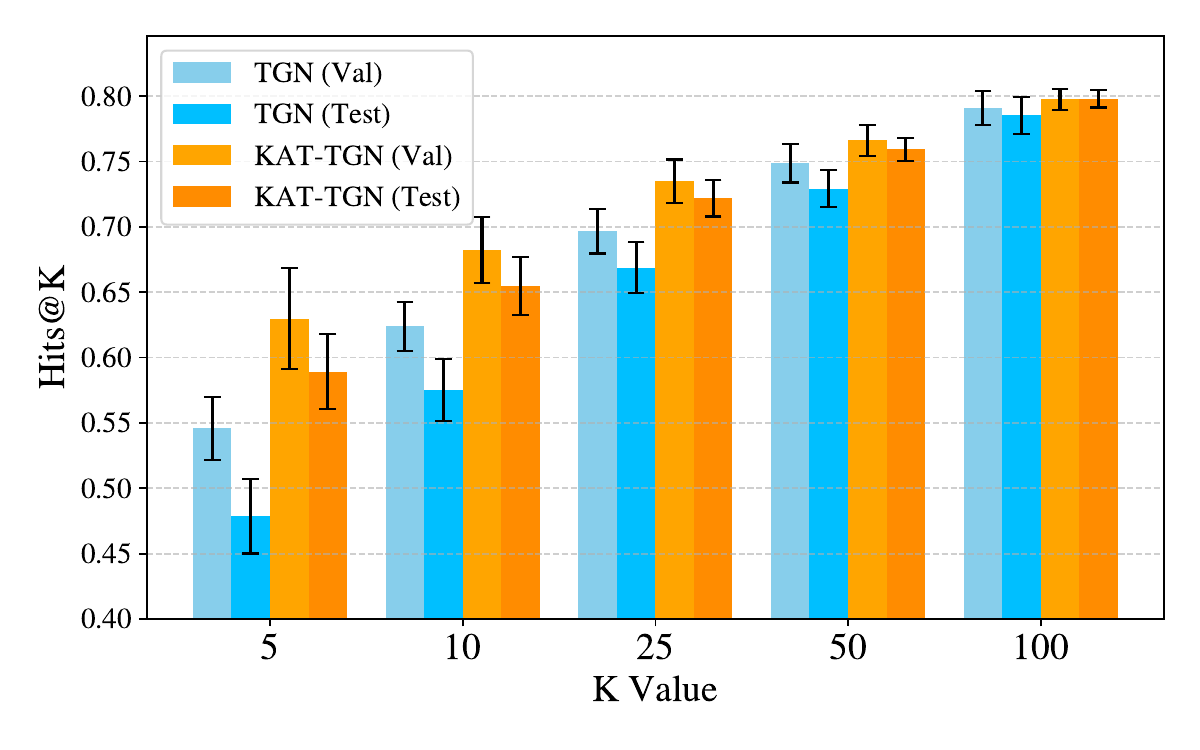}
    \caption{Hits@K for \texttt{tgbl-wiki}}
    \label{fig:wiki_hits}
\end{figure}

To better interpret the ranking of predicted links beyond the MRR metric, we evaluate performance using Hits@K on the \texttt{tgbl-wiki} dataset for various cutoff values of $K$. As shown in Figure~\ref{fig:wiki_hits}, our model \textbf{KEAT-TGN} consistently outperforms the baseline \textbf{TGN} across all $K$ values, both on the validation and test sets. For instance, on the test set, KEAT-TGN achieves a Hits@10 of 65.46\%, compared to 57.51\% for TGN. This consistent improvement demonstrates that KEAT not only improves average ranking (as captured by MRR), but also ranks more correct links among the top-$K$ predictions, enhancing practical utility in retrieval settings.

\subsection{Results on \texttt{tgbl-flight} (Link Prediction)}

\begin{table}[t]
    \centering
    \small
   
    \begin{tabular}{lcc}
    \toprule
    \textbf{Method} & \textbf{Split} & \texttt{tgbl-flight} \\
    \midrule

    \multirow{2}{*}{$\text{EdgeBank}_{\text{tw}}$} 
    & Val  & 0.363 \\
    & Test & 0.387 \\

    \midrule

    \multirow{2}{*}{$\text{EdgeBank}_\ensuremath{\infty}$} 
    & Val  & 0.166 \\
    & Test & 0.167 \\

    \midrule

    \multirow{2}{*}{DyRep} 
    & Val  & 0.573 \scriptsize{$\pm$ 0.013} \\
    & Test & 0.556 \scriptsize{$\pm$ 0.014} \\

    \midrule
    
    \multirow{2}{*}{TGN} & Val & 0.731 \scriptsize{$\pm$ 0.010} \\
                     & Test & 0.705 \scriptsize{$\pm$ 0.020} \\

    \cmidrule{2-3}
    \multirow{2}{*}{KEAT-TGN} & Val & 0.747 \scriptsize{$\pm$ 0.013} \\
                                  & Test & \textbf{0.738} \scriptsize{$\pm$ 0.022} \\

    \cmidrule{2-3}
    
    \multirow{1}{*}{\shortstack[l]{Improvement}} 
    & Test & +3.30\% \\

    \bottomrule
    \end{tabular}
     \caption{Dynamic link prediction results on \texttt{tgbl-flight} using MRR metric.}
    \label{tab:link_pred_results_tgbl_flight}
    \end{table}

On the \texttt{tgbl-flight} dataset, KEAT-TGN achieves the best performance, improving over the TGN baseline by 3.30\% in MRR on the test set (0.738 vs. 0.705). Refer Table \ref{tab:link_pred_results_tgbl_flight} for results. This demonstrates the effectiveness of KEAT in enhancing temporal modeling without altering the backbone architecture. Compared to earlier methods like DyRep and EdgeBank, KEAT-TGN yields significantly higher accuracy. Additionally, while DyGFormer results are not reported here due to severe scalability issues, requiring over 24 hours to complete a single training epoch, KEAT-TGN remains efficient and practical for large-scale temporal graphs.

\begin{table}[t]
\centering
\resizebox{0.9\columnwidth}{!}{%
\begin{tabular}{lccc}
\toprule
\textbf{Dataset} & \textbf{Model} & \textbf{Val} & \textbf{Test} \\
\midrule
\multirow{3}{*}{\texttt{tgbn-trade}} 
  & TGN      & 0.395 {\scriptsize$\pm$ 0.002} & 0.374 {\scriptsize$\pm$ 0.001} \\
  & KEAT-TGN  & \textbf{0.464} {\scriptsize$\pm$ 0.006} & \textbf{0.427} {\scriptsize$\pm$ 0.010} \\
  & \scriptsize{Gain} & \scriptsize{+6.92\%} & \scriptsize{+5.30\%}\\
\midrule
\multirow{3}{*}{\texttt{tgbn-genre}} 
  & TGN      & 0.403 {\scriptsize$\pm$ 0.010} & 0.367 {\scriptsize$\pm$ 0.058} \\
  & KEAT-TGN  & \textbf{0.420} {\scriptsize$\pm$ 0.003} & \textbf{0.420} {\scriptsize$\pm$ 0.002} \\
  & \scriptsize{Gain} & \scriptsize{+1.73\%} & \scriptsize{+5.29\%}\\
\midrule
\multirow{3}{*}{\texttt{tgbn-reddit}} 
  & TGN      & \textbf{0.379} {\scriptsize$\pm$ 0.004} & 0.315 {\scriptsize$\pm$ 0.020} \\
  & KEAT-TGN  & 0.372 {\scriptsize$\pm$ 0.002} & \textbf{0.317} {\scriptsize$\pm$ 0.001} \\
  & \scriptsize{Gain} & \scriptsize{-0.7\%} & \scriptsize{+0.20\%}\\
  
\midrule
\multirow{3}{*}{\texttt{tgbn-token}} 
  & TGN      & 0.189 {\scriptsize$\pm$ 0.005} & 0.141 {\scriptsize$\pm$ 0.006} \\
  & KEAT-TGN  & \textbf{0.194} {\scriptsize$\pm$ 0.008} & \textbf{0.152} {\scriptsize$\pm$ 0.018} \\
  & \scriptsize{Gain} & \scriptsize{+0.50\%} & \scriptsize{+1.1\%}\\
  
\bottomrule
\end{tabular}
}
\caption{Node classification performance (NDCG@10).}
\label{tab:node_classification_ndcg}
\end{table}

\subsection{Results on \texttt{jodie-dataset} (Link Prediction)}

Table~\ref{tab:jodie_results} presents dynamic link prediction performance on the four \texttt{jodie} datasets. We compare KEAT-TGN with the baseline TGN across multiple metrics, including binary cross-entropy loss, average precision (AP), and area under the ROC curve (AUC), on both validation and test sets. KEAT-TGN consistently improves upon TGN across all datasets and metrics. For example, on \texttt{jodie-wiki}, KEAT-TGN achieves a lower loss (0.3895 vs. 0.4065) and higher test AP (0.9660 vs. 0.9593), demonstrating better predictive performance. The improvements are particularly notable on datasets such as \texttt{jodie-lastfm}, where KEAT-TGN yields a significant gain in test AP (0.7438 vs. 0.6193), highlighting its ability to generalize well even in sparse user-item interaction settings. These results underscore the effectiveness of our kernelized temporal attention mechanism in enhancing the link prediction capability of the base TGN model.

\paragraph{Interpretability and Temporal Precision.}
As discussed in the main paper, KEAT enables time-aware edge modulation that results in temporally precise and context-sensitive attention distributions. In Appendix Figure~\ref{fig:attn_summary_1} and Figure \ref{fig:attn_for_10_neighbors_1}, we visualize attention weights over neighbors across varying $\Delta t$, showing how KEAT produces sharp transitions in focus that align with temporal relevance. This contrasts with standard attention mechanisms that often produce flattened or ambiguous weights. The resulting heatmaps demonstrate KEAT's improved interpretability: the model precisely adjusts its attention as edge timestamps become older, preserving temporal fidelity and offering intuitive explanations of which past interactions are deemed important. This clarity in attention evolution enhances both transparency and analytical utility for downstream applications. To further support this, we include two additional examples with varied attention behaviors (see Figures \ref{fig:attn_summary_ex_2},\ref{fig:n_summary_ex_2} for second example and Figures \ref{fig:attn_summary_ex_3},\ref{fig:n_summary_ex_3} for third example).

\begin{figure}[ht]
    \centering
    \includegraphics[width=0.9\linewidth]{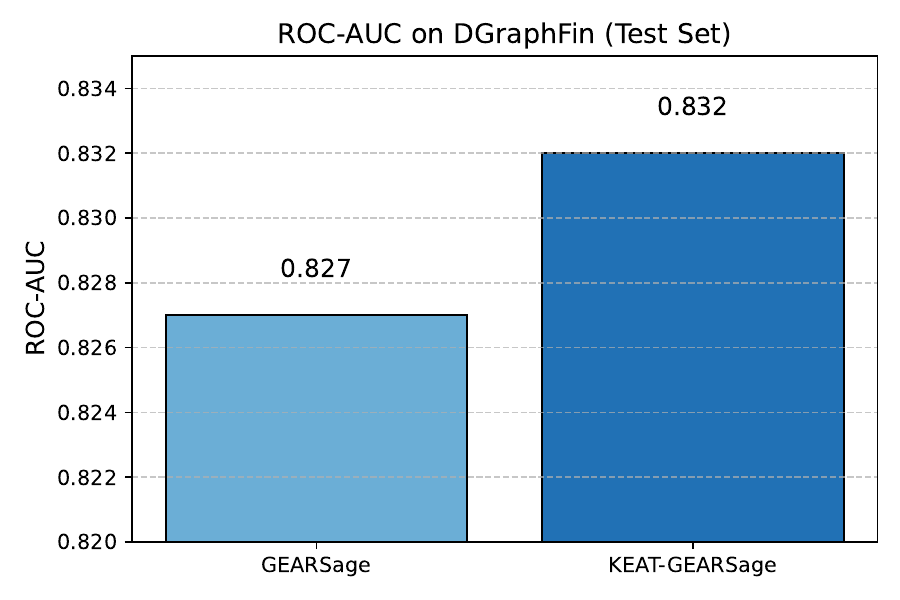}
    \caption{Test ROC-AUC scores on DGraphFin for node classification.}
    \label{fig:dgraphfin_keat}
\end{figure}

\subsection{Results on DGraphFin (Node Classification)}
The ROC-AUC scores on the DGraphFin test set clearly indicate (see Figure \ref{fig:dgraphfin_keat}) that KEAT-GEARSage outperforms the base GEARSage model, achieving a higher score of 0.832 compared to 0.827. This improvement demonstrates KEAT's ability to enhance temporal modeling by better leveraging time-dependent edge information. The gain highlights KEAT’s effectiveness in capturing subtle temporal cues essential for fraud detection in dynamic financial networks.

\paragraph{Time Encoding Ablation.}
We evaluate the effect of KEAT across different time encoding strategies on the \texttt{tgbl-wiki}, \texttt{tgbl-review} and \texttt{tgbl-coin} datasets. As shown in Table~\ref{tab:time_encoding_ablation_2}, incorporating KEAT consistently improves performance across both datasets and models. For instance, GraphMixer with KEAT achieves a test MRR of 0.409 on \texttt{tgbl-review} and 0.714 on \texttt{tgbl-coin}, outperforming the baseline variants without KEAT. Similar trends are observed with TGN/TGAT models, where KEAT yields consistent gains over the default time encodings. Notably, we do not include results for LETe on \texttt{tgbl-review} and \texttt{tgbl-coin} , as it requires significantly more parameters and memory to train as shown in Table \ref{tab:param-count}, which hinders its scalability for these benchmarks.

\subsection{Model Parameters}
Table \ref{tab:param-count} compares trainable parameter counts across various time encoding strategies when used with the TGN backbone. While \texttt{GraphMixer} relies on non-trainable, fixed encodings and thus has slightly fewer parameters, it lacks the flexibility to adapt temporal patterns. The standard \texttt{TGN/TGAT} introduces minimal parameter overhead for time encoding (\texttt{+200}). LeTE-based approaches add more trainable parameters depending on the modeling strategy, with the highest cost for fully learned Fourier or Spline encodings. This demonstrates that TGN allows efficient parameter sharing and modular integration of expressive time encodings, striking a balance between model capacity and learnability.

\section{Appendix H: Code and Reproducibility}
To support reproducibility and fair comparison, we release our complete codebase, including training and evaluation scripts, under an anonymous folder in supplementary zip. The code includes instructions, dataset links, and configuration files to replicate experiments presented in this paper. We ensure that the released code adheres to double-blind review standards and will be made public upon acceptance.

\begin{table*}[t]
\centering

\begin{tabular}{llccccc}
\toprule
\textbf{Dataset} & \textbf{Method} & \textbf{Loss} ↓ & \textbf{Val AP} ↑ & \textbf{Val AUC} ↑ & \textbf{Test AP} ↑ & \textbf{Test AUC} ↑ \\
\midrule
\multirow{2}{*}{\texttt{jodie-wiki}} 
& TGN       & 0.4065 & 0.9686 & 0.9659 & 0.9593 & 0.9575 \\
& KEAT-TGN   & \textbf{0.3895} & \textbf{0.9733} & \textbf{0.9703} & \textbf{0.9660} & \textbf{0.9634} \\
\midrule
\multirow{2}{*}{\texttt{jodie-reddit}} 
& TGN       & {0.6876} & 0.9185 & 0.9147 & 0.9190 & 0.9149 \\
& KEAT-TGN   & \textbf{0.6807} & \textbf{0.9264} & \textbf{0.9219} & \textbf{0.9213} & \textbf{0.9166} \\
\midrule
\multirow{2}{*}{\texttt{jodie-mooc}} 
& TGN       & {0.5550} & 0.8626 & {0.8875} & 0.8477 & {0.8794} \\
& KEAT-TGN   & \textbf{0.5420} & \textbf{0.8771} & \textbf{0.8991} & \textbf{0.8637} & \textbf{0.8898} \\

\midrule
\multirow{2}{*}{\texttt{jodie-lastfm}} 
& TGN       & {0.9760} & 0.7887 & 0.7891 & 0.6193 & 0.6245 \\
& KEAT-TGN   & \textbf{0.9719} & \textbf{0.7968} & \textbf{0.7939} & \textbf{0.7438} & \textbf{0.7420} \\
\bottomrule
\end{tabular}

\caption{Link prediction results on \texttt{jodie-datasets}, using one negative sample per test edge. The model is trained with binary cross-entropy loss, and performance is reported using average precision (AP) and area under the curve (AUC). The implementation follows the \texttt{pytorch-geometric} framework.}

\label{tab:jodie_results}
\end{table*}

\begin{table*}[t]
\centering

\begin{tabular}{lccccccc}
\toprule
\textbf{Time Encoding} & \textbf{Model} & \multicolumn{2}{c}{\texttt{tgbl-wiki}} & \multicolumn{2}{c}{\texttt{tgbl-review}} & \multicolumn{2}{c}{\texttt{tgbl-coin}} \\
\cmidrule{3-8}
 & & Val & Test & Val & Test & Val & Test \\
\midrule
\texttt{GraphMixer} & w/o KEAT & 0.477 \scriptsize{$\pm$ 0.041} & 0.432 \scriptsize{$\pm$ 0.061} & 0.337 {\scriptsize$\pm$ 0.007} & 0.404 {\scriptsize$\pm$ 0.008} & 0.659 {\scriptsize$\pm$ 0.011} & 0.688 {\scriptsize$\pm$ 0.028} \\
\texttt{GraphMixer} & w/ KEAT  & \textbf{0.489} \scriptsize{$\pm$ 0.033} & \textbf{0.458} \scriptsize{$\pm$ 0.025} & \textbf{0.341} {\scriptsize$\pm$ 0.002} & \textbf{0.409} {\scriptsize$\pm$ 0.005} & \textbf{0.675} {\scriptsize$\pm$ 0.010} & \textbf{0.714} {\scriptsize$\pm$ 0.024} \\
\midrule
\texttt{TGN/TGAT} & w/o KEAT   & 0.435 \scriptsize{$\pm$ 0.069} & 0.396 \scriptsize{$\pm$ 0.060} & 0.313 {\scriptsize$\pm$ 0.012} & 0.349 {\scriptsize$\pm$ 0.020} & 0.607 {\scriptsize$\pm$ 0.014} & 0.586 {\scriptsize$\pm$ 0.037} \\
\texttt{TGN/TGAT} & w/ KEAT    & \textbf{0.515} \scriptsize{$\pm$ 0.053} & \textbf{0.474} \scriptsize{$\pm$ 0.031} & \textbf{0.324} {\scriptsize$\pm$ 0.004} & \textbf{0.380} {\scriptsize$\pm$ 0.007} & \textbf{0.624} {\scriptsize$\pm$ 0.012} & \textbf{0.632} {\scriptsize$\pm$ 0.023} \\
\midrule
\texttt{LeTE} & w/o KEAT & 0.431 \scriptsize{$\pm$ 0.064} & 0.410 \scriptsize{$\pm$ 0.048} &-&-&-&-\\
\texttt{LeTE} & w/ KEAT  & \textbf{0.532} \scriptsize{$\pm$ 0.045} & \textbf{0.483} \scriptsize{$\pm$ 0.042} &-&-&-&-\\
\bottomrule
\end{tabular}
\caption{Ablation study on time encoding strategies with and without KEAT across \texttt{tgbl-wiki}, \texttt{tgbl-review} and \texttt{tgbl-coin} datasets. We report validation and test MRR.}
\label{tab:time_encoding_ablation_2}
\end{table*}

\begin{table*}[h]
\centering

\begin{tabular}{lccccc}
\toprule
\textbf{Component} & \texttt{GraphMixer} & \texttt{TGN/TGAT} & \texttt{LeTE-Fourier} & \texttt{LeTE-Spline} & \texttt{LeTE-50/50} \\
\midrule
Memory                & 172{,}200 & 172{,}400 & 272{,}500 & 262{,}400 & 220{,}250 \\
GAT                   & 67{,}600  & 67{,}800  & 167{,}900 & 157{,}800 & 115{,}650 \\
Link Predictor        & 20{,}301  & 20{,}301  & 20{,}301  & 20{,}301  & 20{,}301 \\
\midrule
Time Encoding (TE)    & 0        & 200       & 100{,}300 & 90{,}200  & 48{,}050 \\
\midrule
\textbf{Total Params} & 260{,}101 & 260{,}301 & 360{,}401 & 350{,}301 & 308{,}151 \\
\textbf{\% More Params} & -0.08\% & --- & +38.46\% & +34.58\% & +18.38\% \\
\bottomrule
\end{tabular}
\caption{Trainable parameters comparison across different time encoding strategies using the TGN backbone for task of link prediction. TGN has three components: memory, GAT, and link predictor, and time encoding (TE) is shared between memory and GAT. LeTE variants differ in how 100-dimensional time encodings are split between Fourier and Spline modeling. $\text{Total params} = \text{Memory} + \text{GAT} + \text{Link Predictor} - \text{Time Encodings}$, as time encoder is shared.}
\label{tab:param-count}
\end{table*}

\begin{figure*}[t]
\centering
\subfloat[\texttt{tgbl-wiki/jodie-wiki}]{\includegraphics[width=0.30\textwidth]{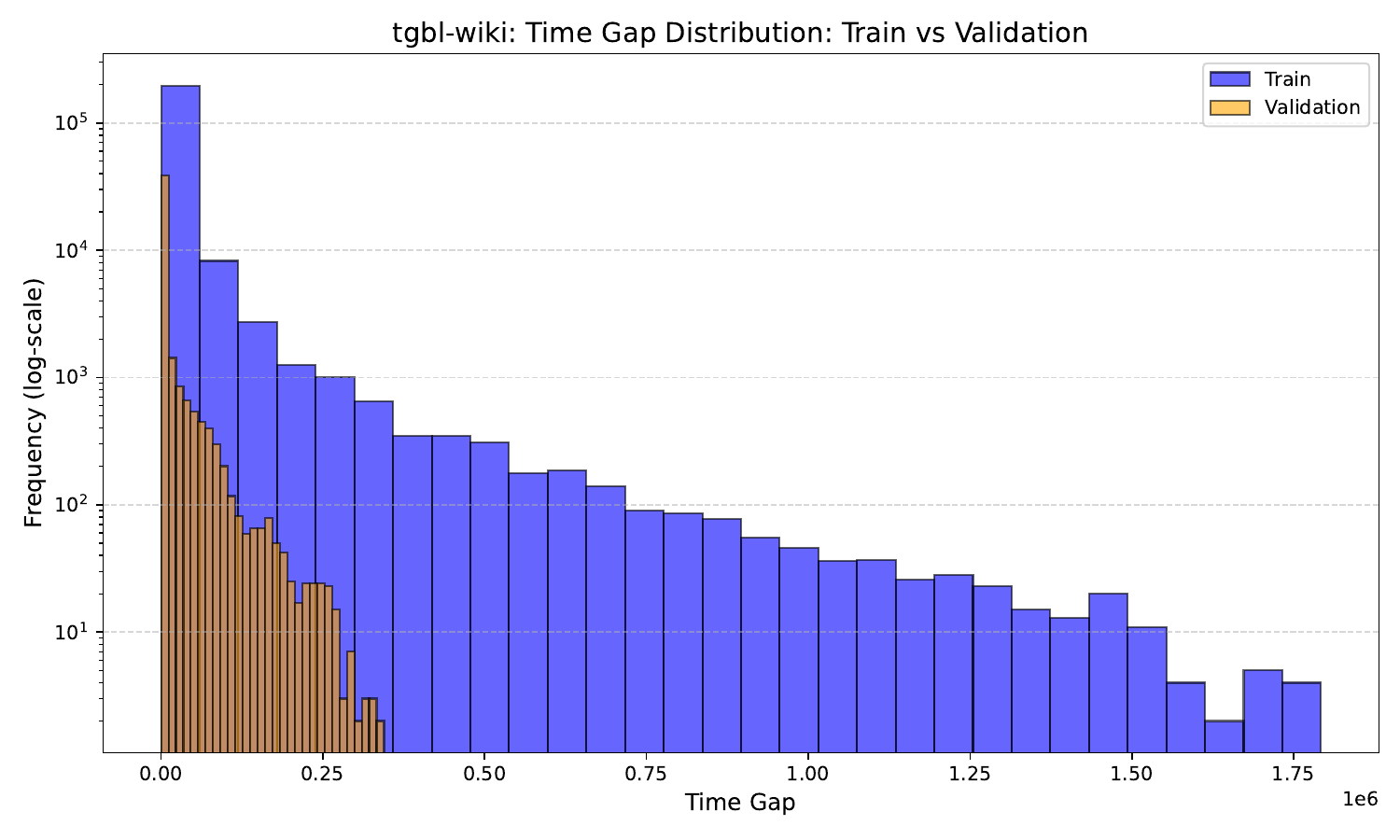}}\hfill
\subfloat[\texttt{tgbl-review}]{\includegraphics[width=0.30\textwidth]{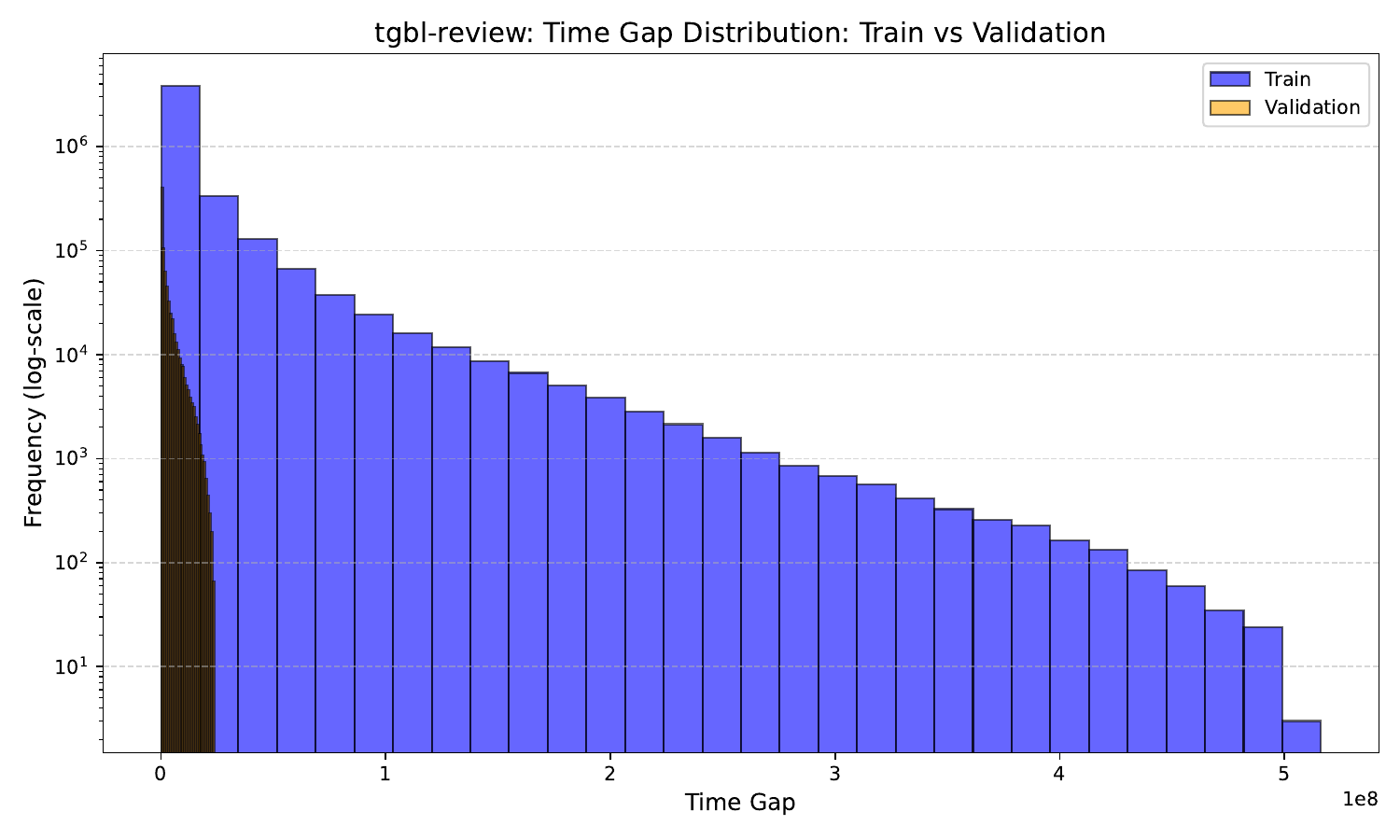}}\hfill
\subfloat[\texttt{tgbl-coin}]{\includegraphics[width=0.30\textwidth]{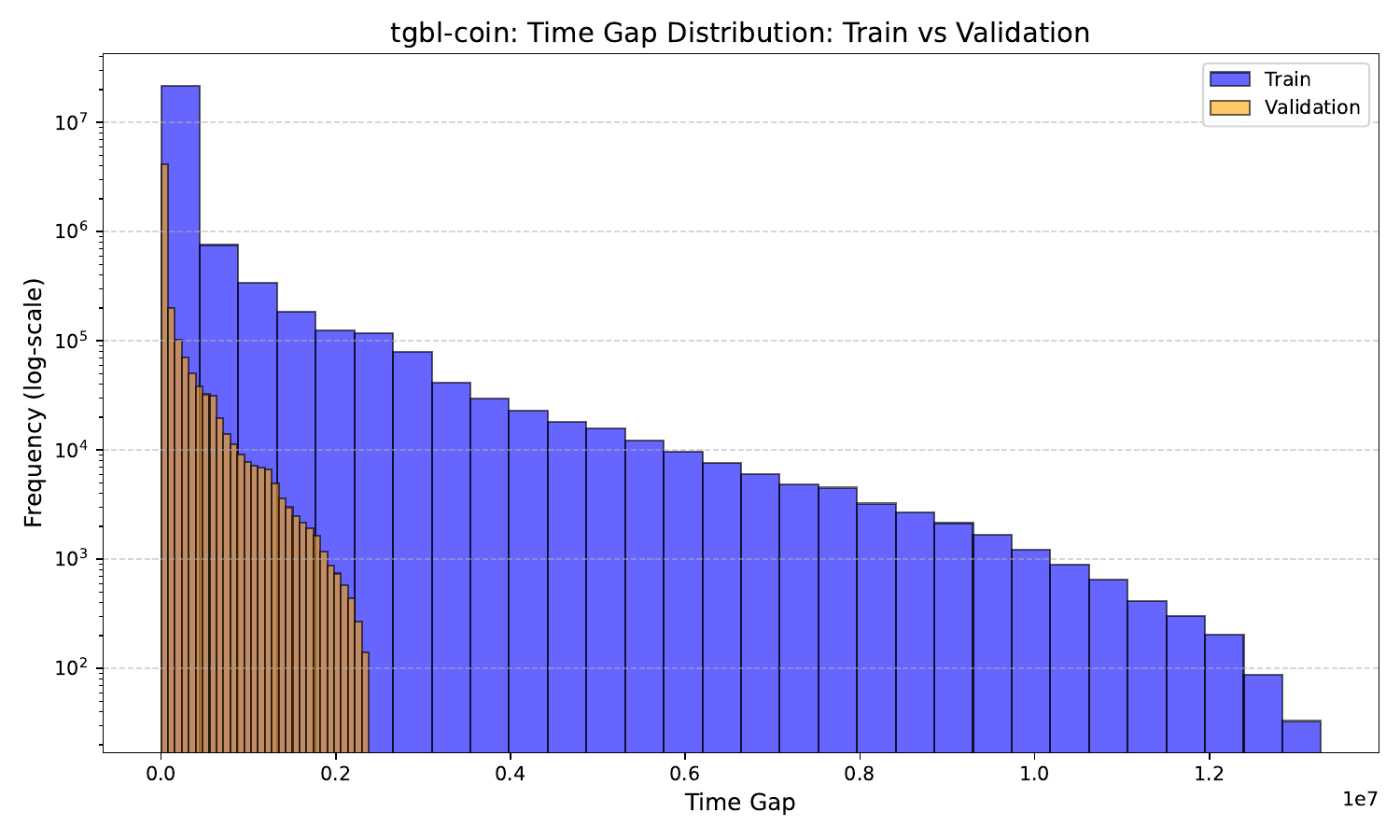}}\\[1ex]

\subfloat[\texttt{tgbl-comment}]{\includegraphics[width=0.30\textwidth]{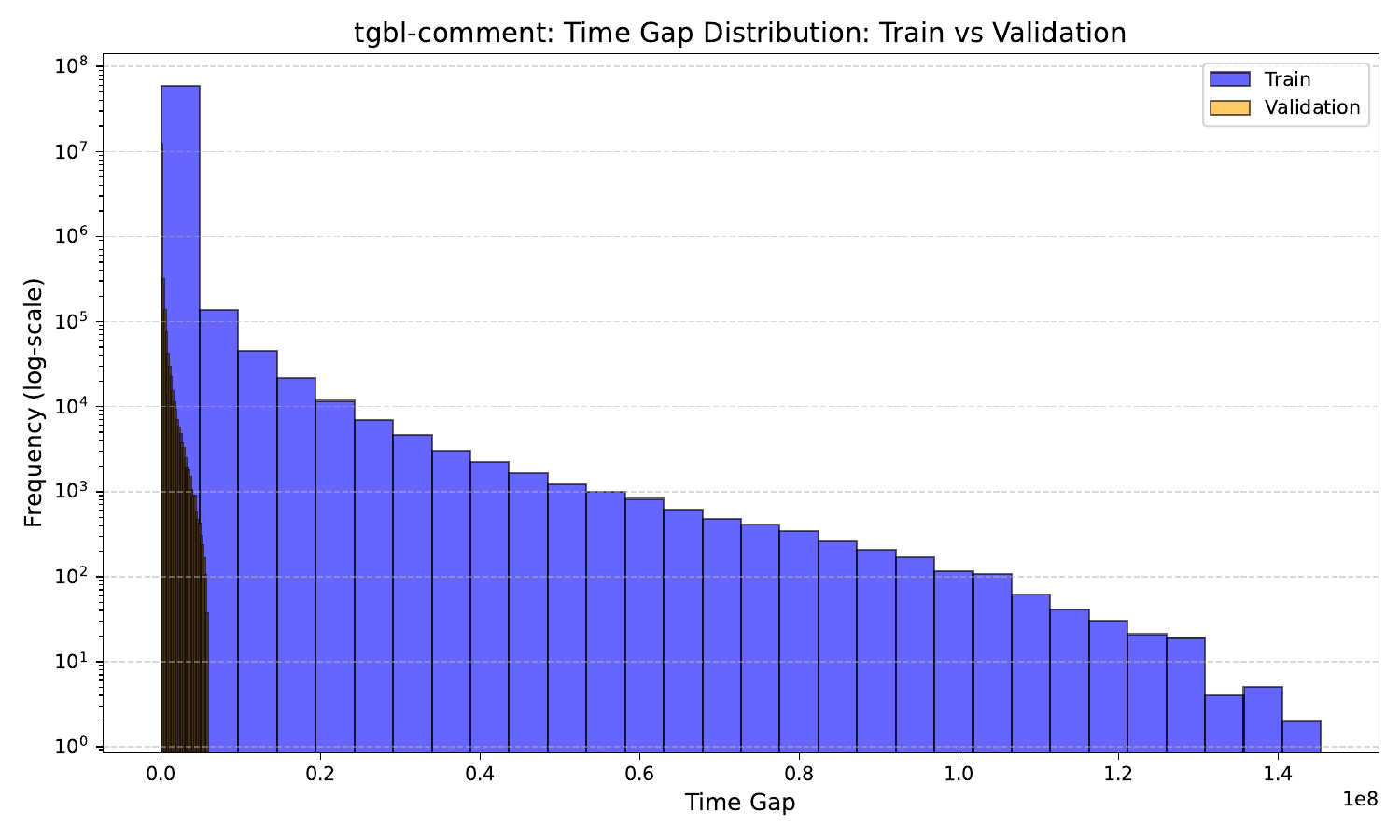}}\hfill
\subfloat[\texttt{tgbl-flight}]{\includegraphics[width=0.30\textwidth]{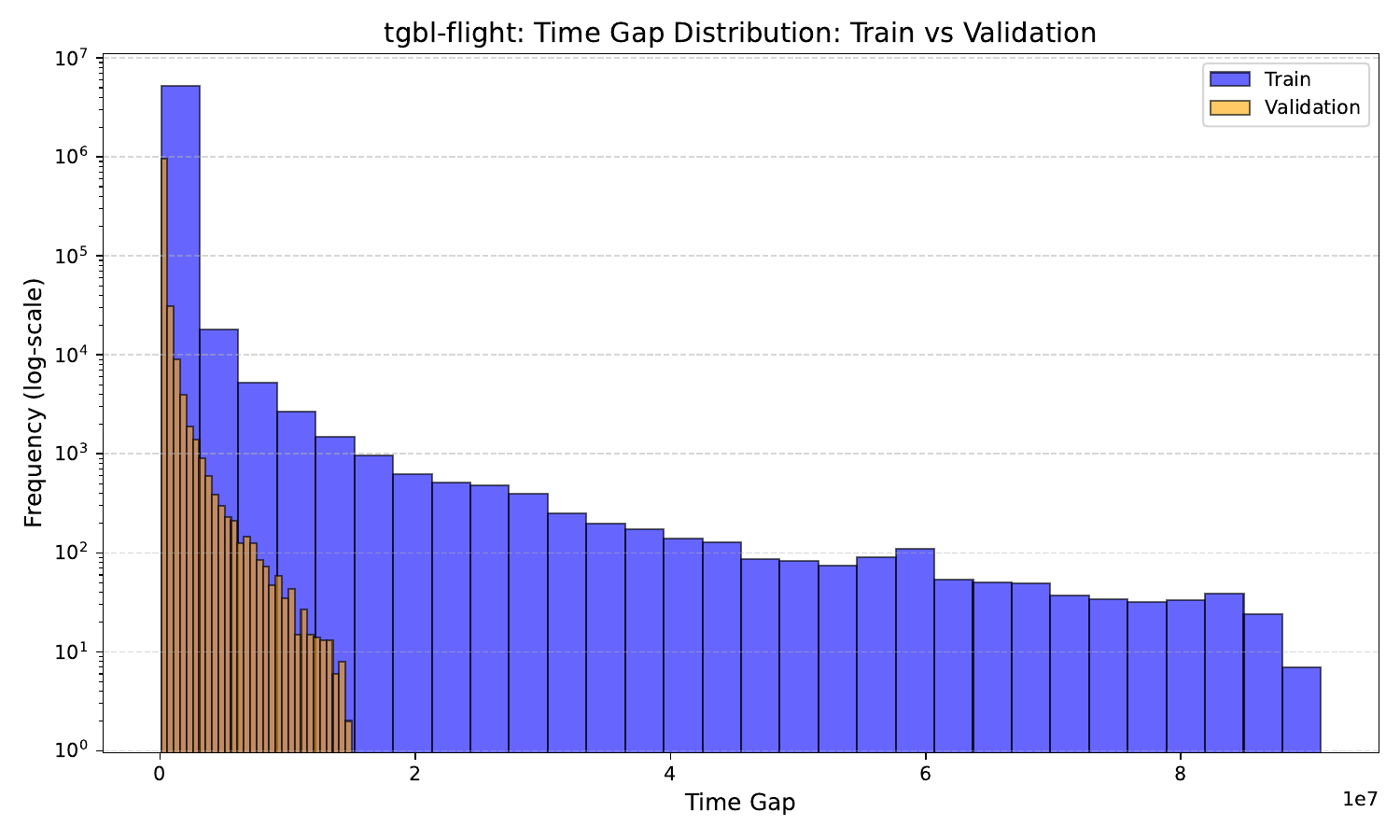}}\hfill
\subfloat[\texttt{tgbn-trade}]{\includegraphics[width=0.30\textwidth]{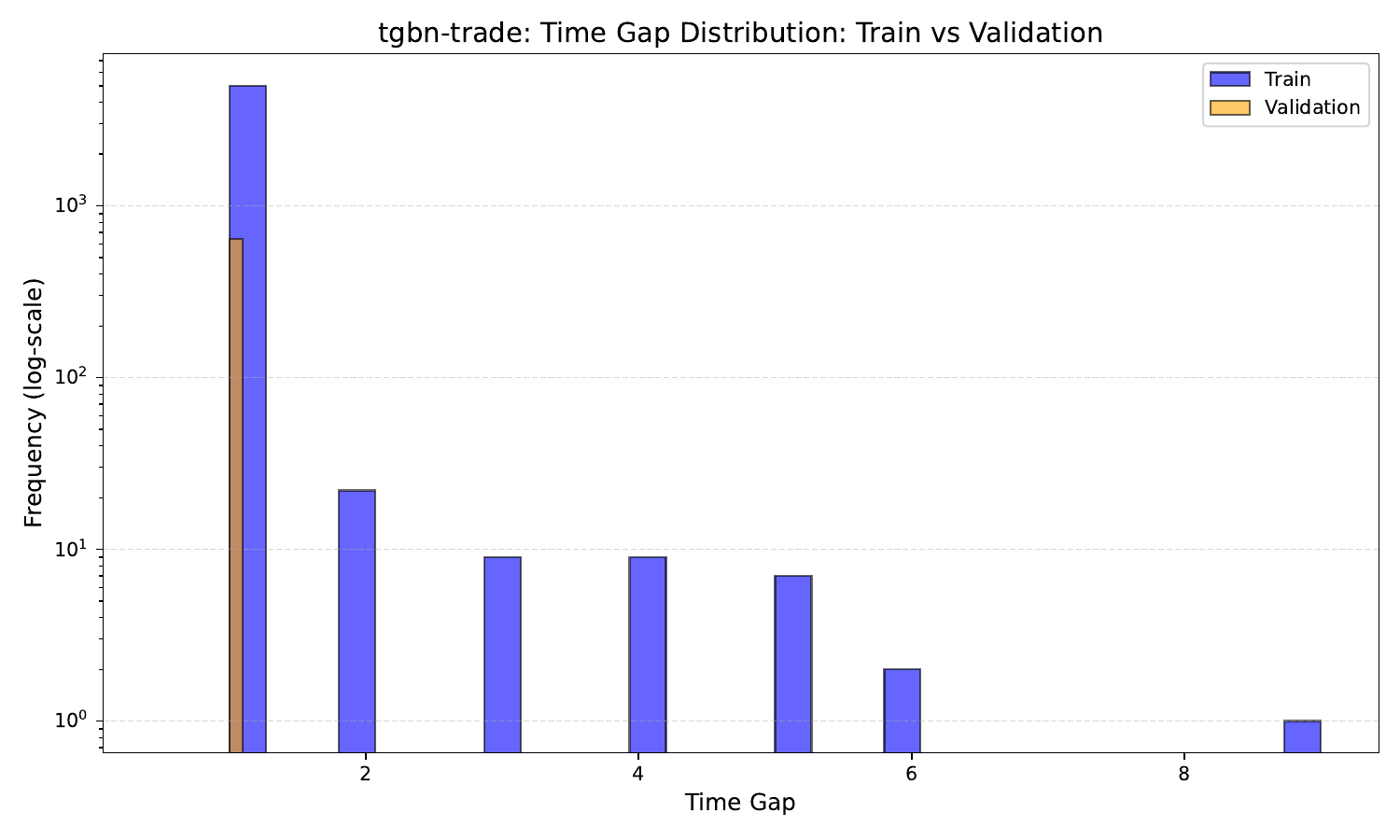}}\\[1ex]

\subfloat[\texttt{tgbn-genre}]{\includegraphics[width=0.30\textwidth]{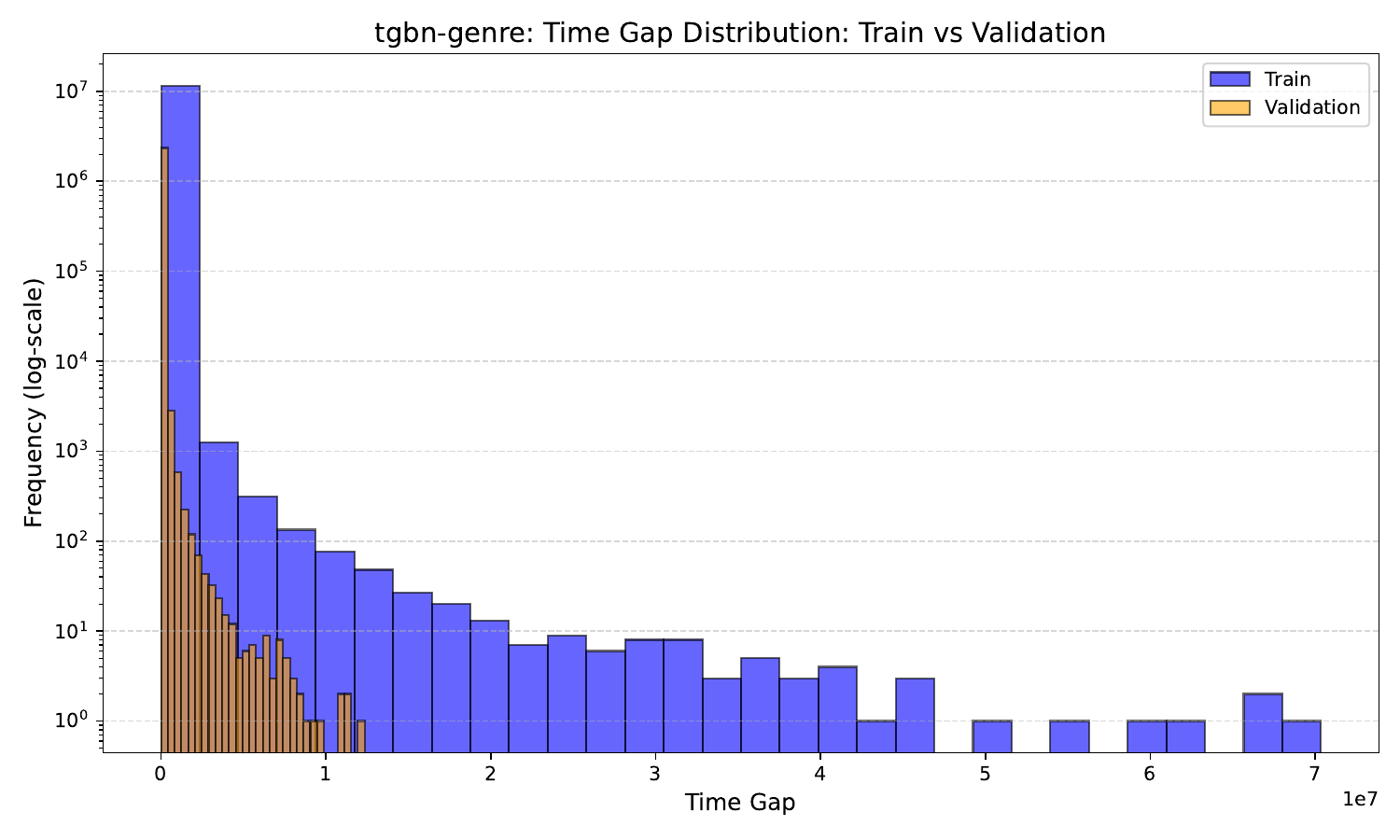}}\hfill
\subfloat[\texttt{tgbn-reddit}]{\includegraphics[width=0.30\textwidth]{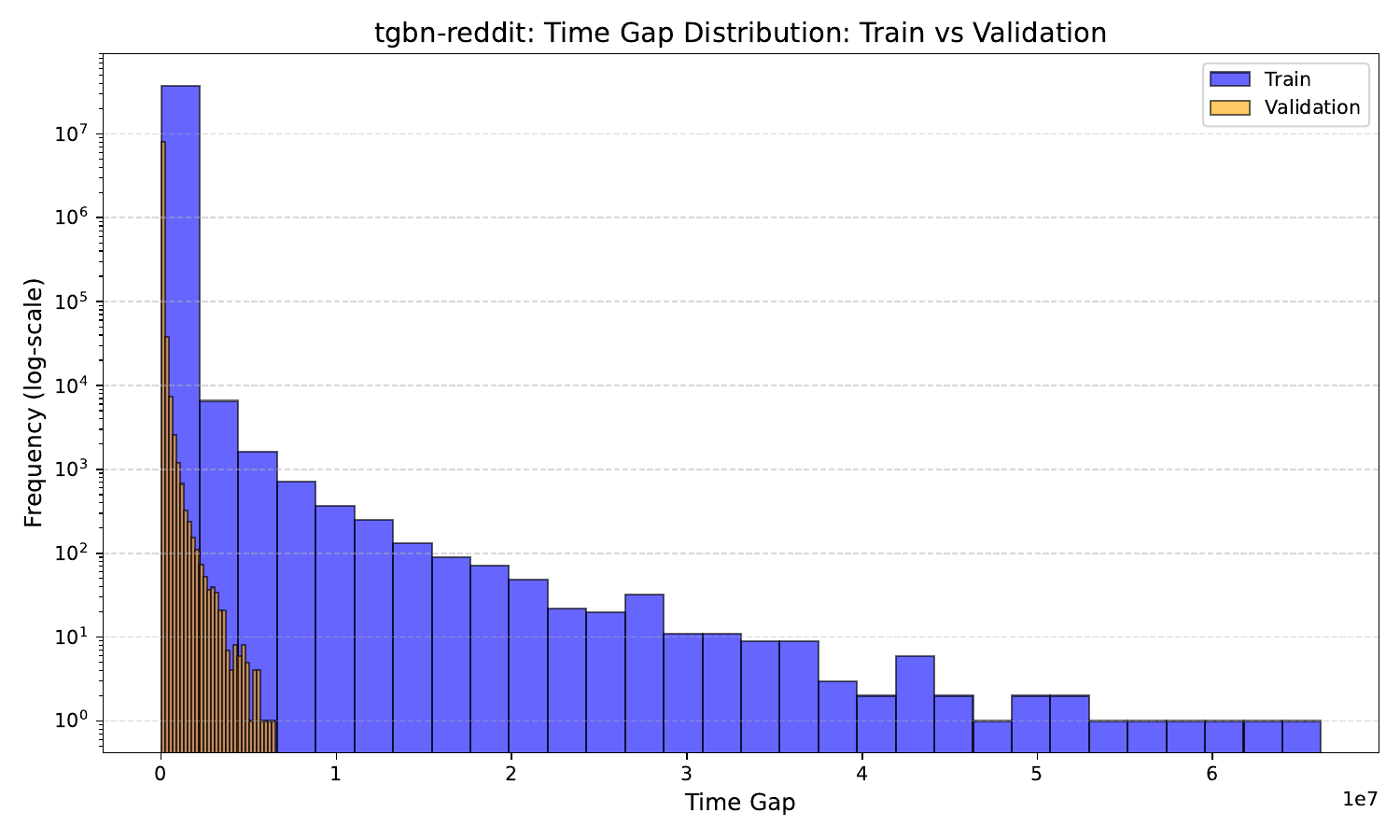}}\hfill
\subfloat[\texttt{tgbn-token}]{\includegraphics[width=0.30\textwidth]{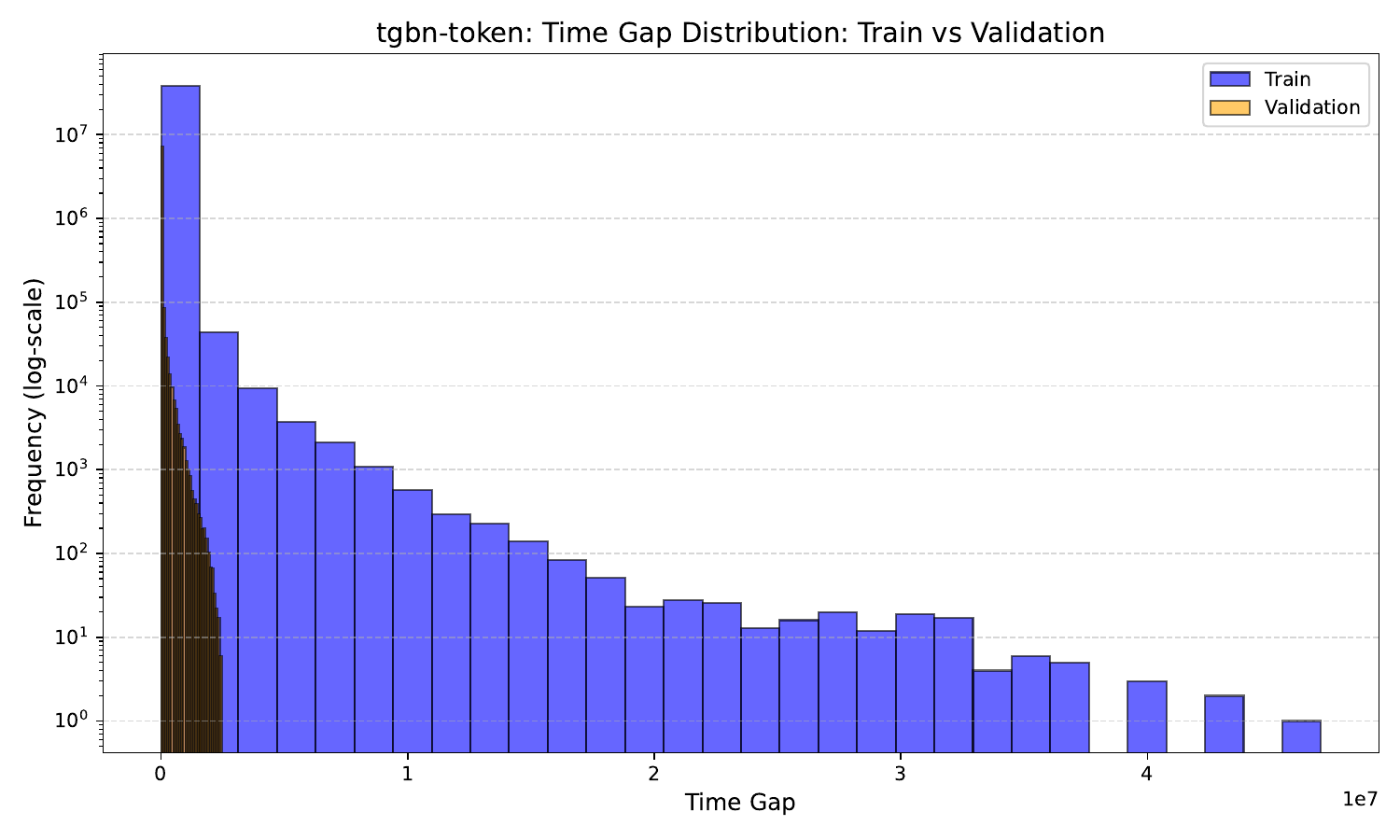}}\\[1ex]

\subfloat[\texttt{jodie-reddit}]{\includegraphics[width=0.30\textwidth]{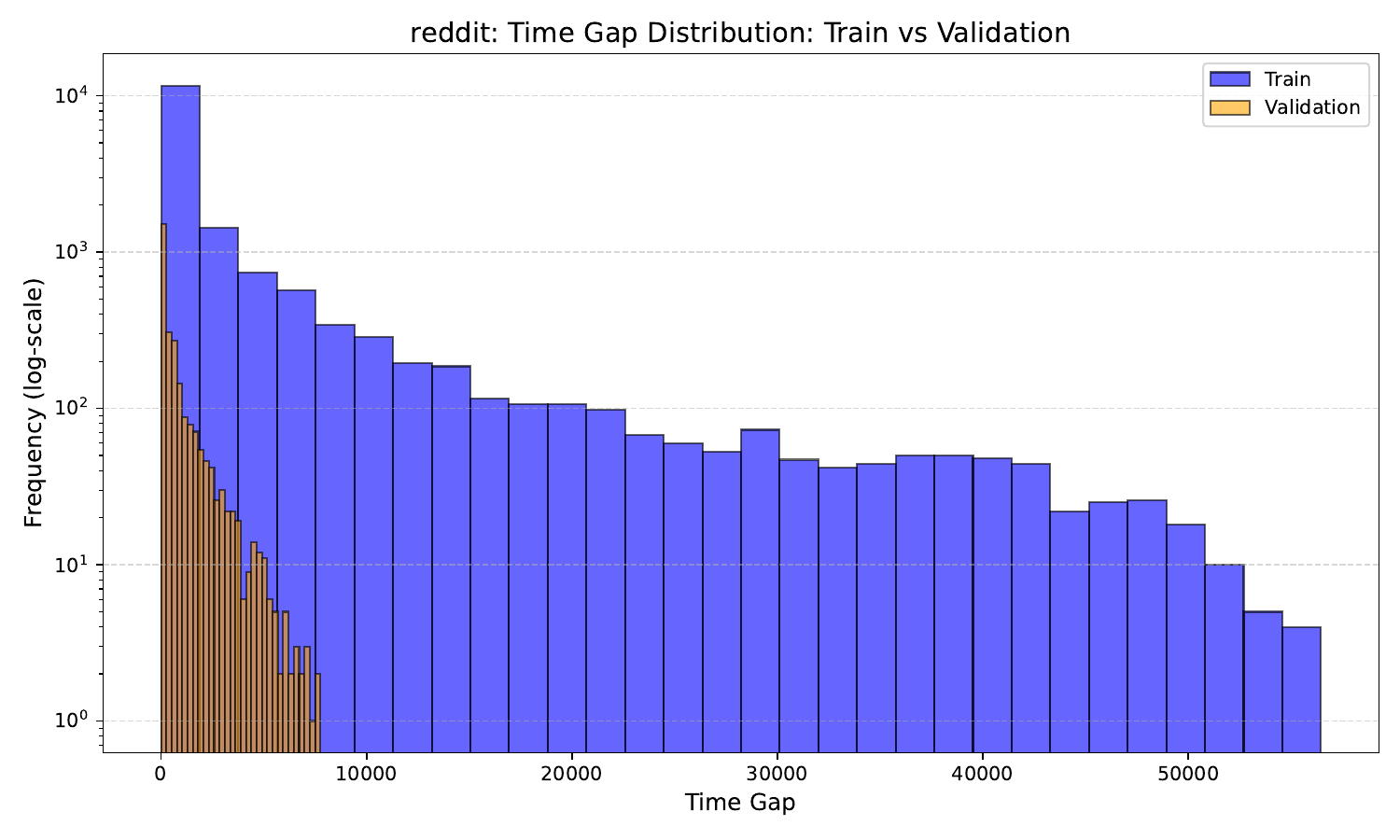}}\hfill
\subfloat[\texttt{jodie-mooc}]{\includegraphics[width=0.30\textwidth]{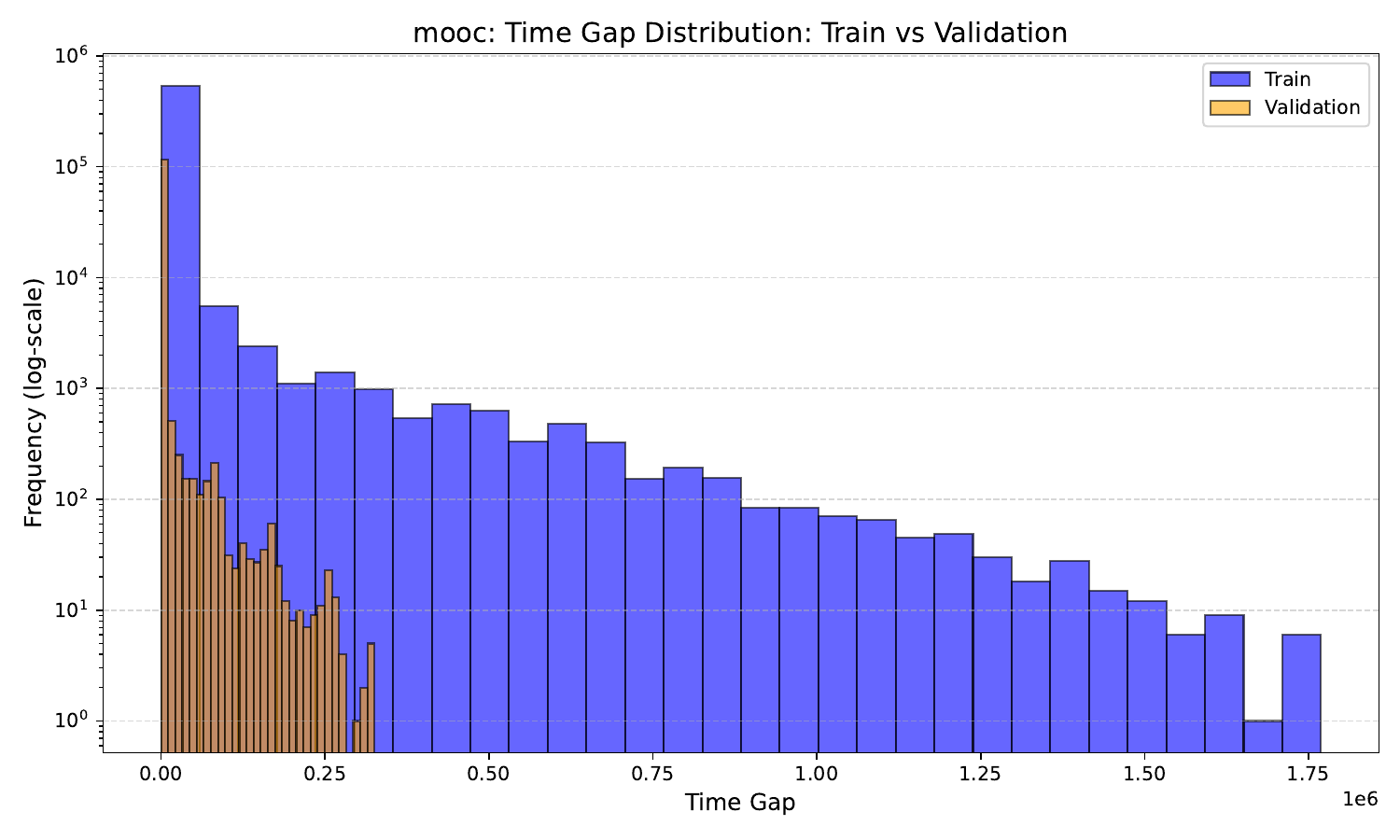}}\hfill
\subfloat[\texttt{jodie-lastfm}]{\includegraphics[width=0.30\textwidth]{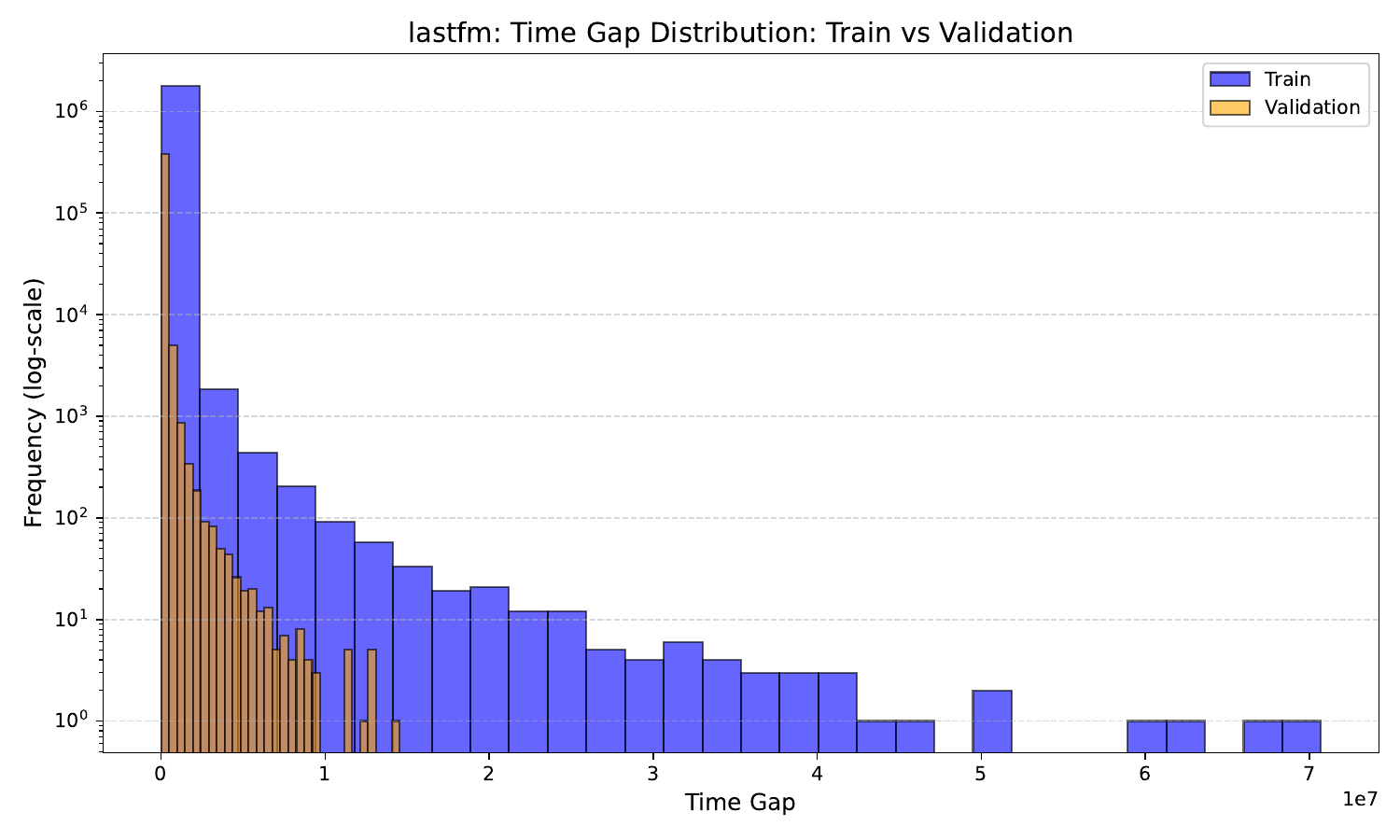}}

\caption{Inter-arrival time distributions $p(\Delta t)$ for 12 temporal graph datasets. Each subplot shows the distribution of time gaps $\Delta t$ between consecutive node-level interactions, computed separately for training and validation sets. Noticeable shifts in $p(\Delta t)$ across these splits indicate temporal distribution drift, which can impact model generalization and motivates the use of kernel-based temporal modulation.}

\label{fig:12_dataset_dist}
\end{figure*}

\begin{figure*}[t]
    \centering
    \includegraphics[width=0.98\linewidth]{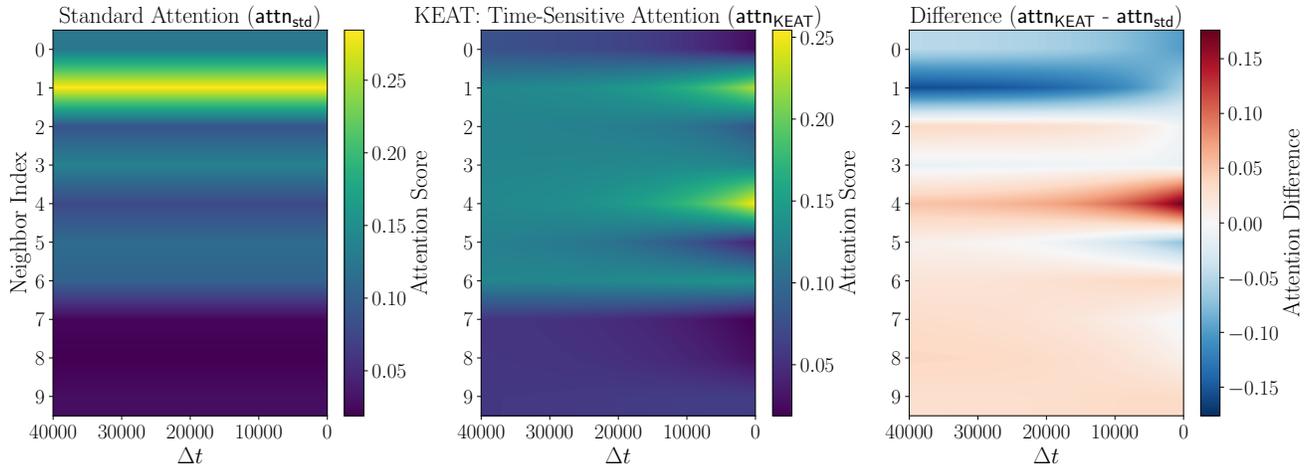}
   \caption{Reproduced temporal attention visualization on \texttt{tgbl-wiki} to support interpretability analysis in Appendix~G. The figure shows attention weights over ten neighbors across relative time $\Delta t$. (Left) Standard attention exhibits \textbf{semantic attention blurring}, assigning nearly uniform weights regardless of temporal distance. (Middle) KEAT introduces time-sensitive edge modulation, assigning higher weights to recent and contextually relevant neighbors. (Right) The difference heatmap illustrates how KEAT provides sharper and temporally aware focus. This figure complements the interpretability discussion by showing how attention evolves with edge timing. Figure \ref{fig:attn_for_10_neighbors_1} shows individual neighbor-wise attentions.}
    \label{fig:attn_summary_1}
\end{figure*}

\begin{figure*}[t]
    \centering
    \includegraphics[width=0.98\linewidth]{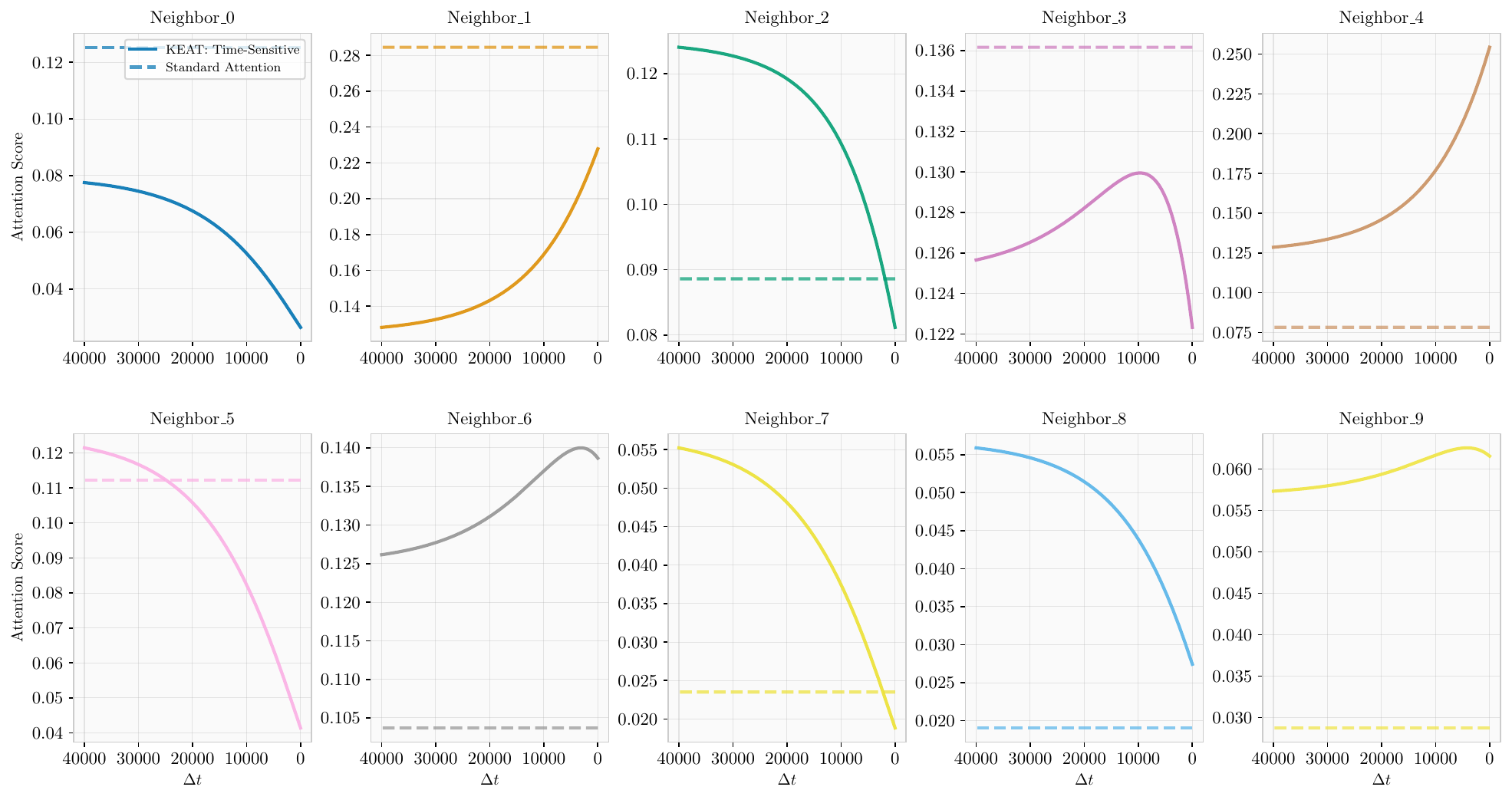}
    \caption{Per-neighbor attention dynamics on \texttt{tgbl-wiki} under time-invariant and time-sensitive settings.
Each subplot shows attention weights assigned to a specific neighbor across relative time $\Delta t$. Lesser value of $\Delta t$ indicates recent interaction. The two curves represent standard attention and KEAT-modulated (time-sensitive) attention. Standard attention looks time-invariant. KEAT dynamically adjusts attention over time, revealing sharper and more context-aware focus compared to the flatter, temporally agnostic patterns of standard attention.}
    \label{fig:attn_for_10_neighbors_1}
\end{figure*}

\begin{figure*}[t]
    \centering
    \includegraphics[width=0.98\linewidth]{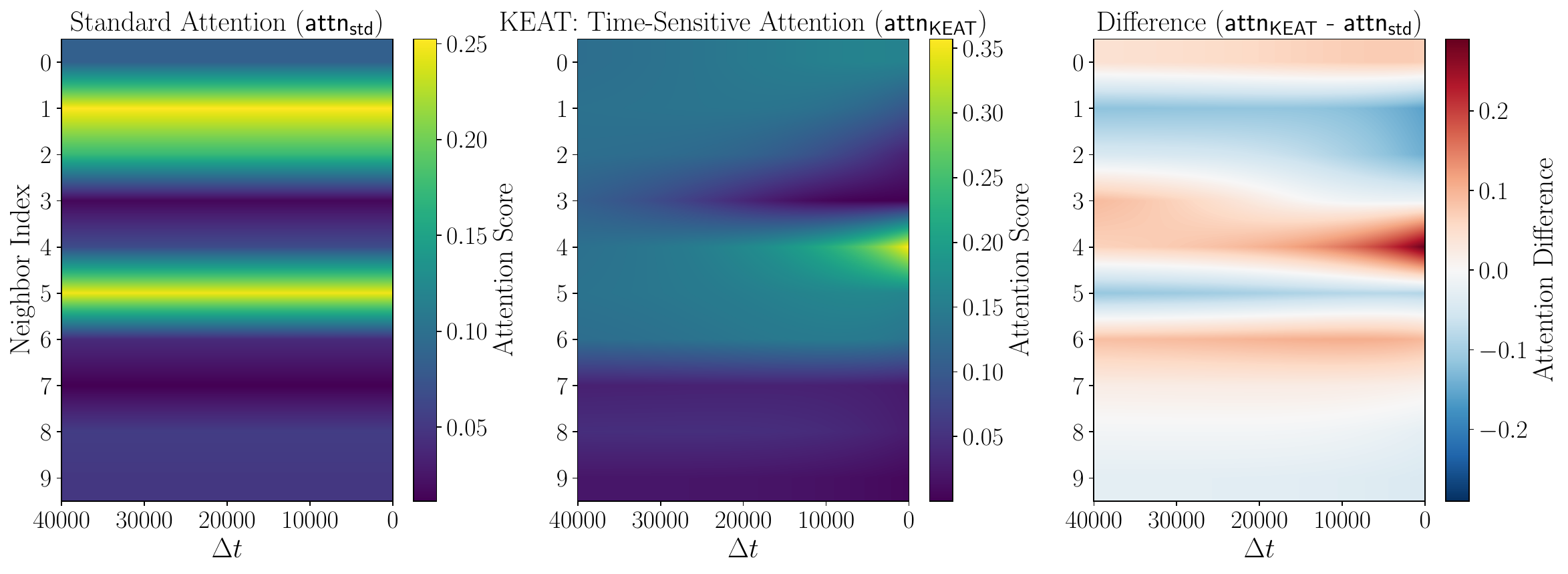}
   \caption{Example 2: Attention visualization, \texttt{tgbl-wiki}.}
    \label{fig:attn_summary_ex_2}
\end{figure*}

\begin{figure*}[t]
    \centering
    \includegraphics[width=0.98\linewidth]{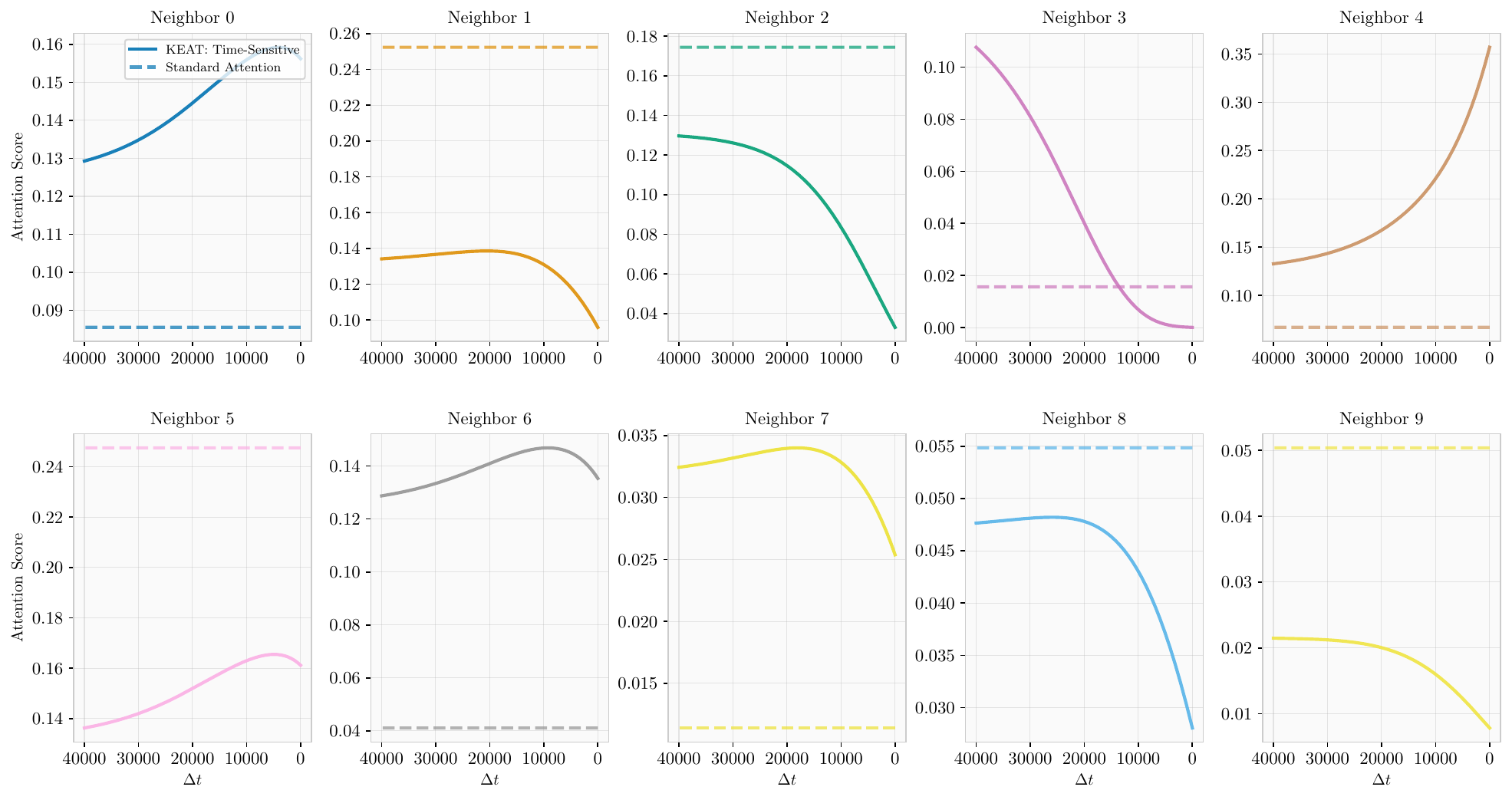}
   \caption{Example 2: Per-neighbor attention visualization, \texttt{tgbl-wiki}.}
    \label{fig:n_summary_ex_2}
\end{figure*}

\begin{figure*}[t]
    \centering
    \includegraphics[width=0.98\linewidth]{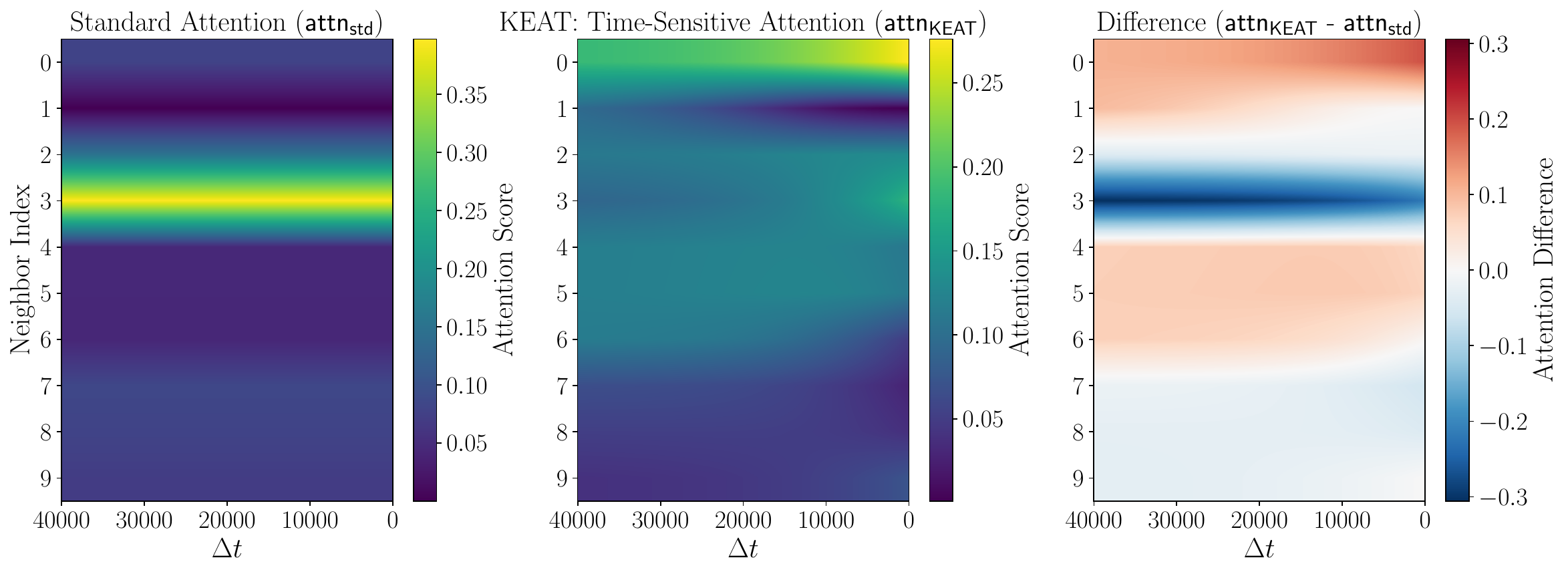}
   \caption{Example 3: Attention visualization, \texttt{tgbl-wiki}.}
    \label{fig:attn_summary_ex_3}
\end{figure*}

\begin{figure*}[t]
    \centering
    \includegraphics[width=0.98\linewidth]{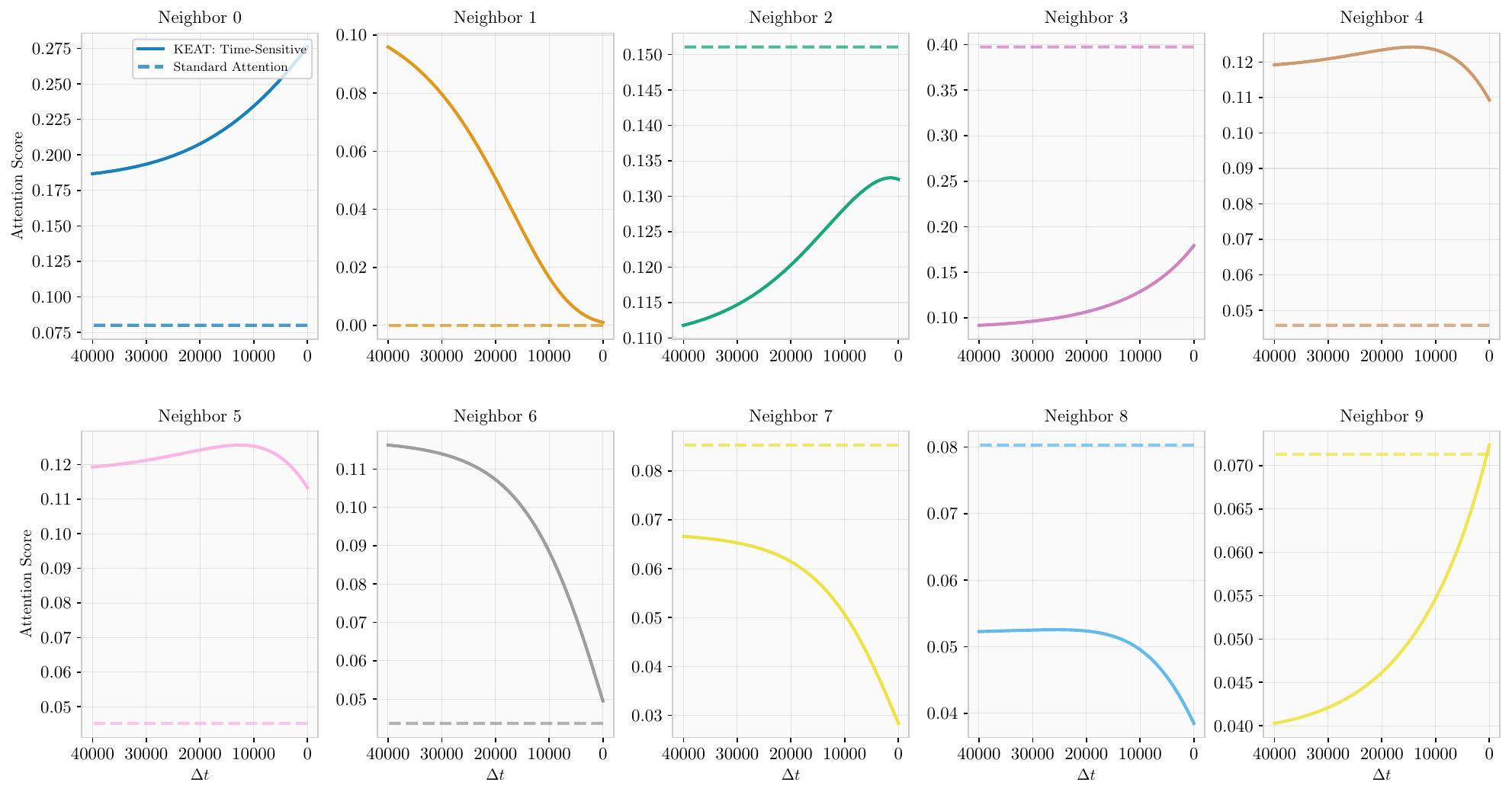}
   \caption{Example 3: Per-neighbor attention visualization, \texttt{tgbl-wiki}.}
    \label{fig:n_summary_ex_3}
\end{figure*}

\end{document}